\algnewcommand\algorithmicinput{\textbf{Input:}}
\algnewcommand\INPUT{\item[\algorithmicinput]}
	\tikzstyle{block} = [rectangle, rounded corners, minimum width=3cm, minimum height=1cm,text centered, draw=black, fill=red!30]
	\tikzstyle{new} = [rectangle, rounded corners, minimum width=1cm, minimum
	\tikzstyle{arrow} = [thick,->,>=stealth]
\DeclareFontFamily{OT1}{pzc}{}
\DeclareFontShape{OT1}{pzc}{m}{it}{<-> s * [1.200] pzcmi7t}{}
\DeclareMathAlphabet{\mathpzc}{OT1}{pzc}{m}{it}
  \DeclareMathAlphabet\PazoBB{U}{fplmbb}{m}{n}%
\algrenewcommand\textproc{}%
\let\oldthempfootnote\thempfootnote
\def\thempfootnote{\text{\oldthempfootnote}}
\setlist[itemize]{leftmargin=*}
\setlist[enumerate]{leftmargin=*}
\colorlet{lightgray}{green!4}
\newcommand{\GG}{\mathcal{G}}
\newcommand{\VV}{\mathcal{V}}
\newcommand{\EE}{\mathcal{E}}
\newcommand{\bell}{\boldsymbol{\ell} }
\newcommand{\ppp}{\boldsymbol\pi}
\newcommand{\AAA}{\mathbf{A}}
\newcommand{\Acal}{\mathcal{A}}
\newcommand{\Scal}{\mathcal{S}}
\newcommand{\Ncal}{\mathcal{N}}
\newcommand{\Bcal}{\mathcal{B}}
\newcommand{\Wcal}{\mathcal{W}}
\newcommand{\Fcal}{\mathcal{F}}
\DeclareMathAlphabet\mathbfcal{OMS}{cmsy}{b}{n}
\newcommand{\Rset}{\mathbb{R}}
\DeclareMathOperator*{\argmax}{arg\,max}
\newcommand{\Ecal}{\mathcal{E}}
\newcommand{\Vcal}{\mathcal{V}}
\newcommand{\Ucal}{\mathcal{U}}
\newcommand{\kcomp}{k}
\newcommand{\kcomm}{b}
\let\oldsim\sim
\renewcommand{\sim}{\overset{e}{\oldsim}}
\newcommand{\fe}{f}
\newcommand{\fv}{g}
\newcommand{\edg}{\mathsf{edges}}
\newcommand{\OPTe}{\text{\text{OPT}$_1$}}
\newcommand{\OPTv}{\text{\text{OPT}$_2$}}
\newcommand{\Vgrd}{\VV_\text{grd}}
\newcommand{\Egrd}{\Ecal_\text{grd}}
\definecolor{kkGreen}{RGB}{201,232,206}
\definecolor{kkRed}{RGB}{255,196,215}
\definecolor{kkBlue}{RGB}{214,226,255}
\newcommand{\Gall}{\GG_\mathsf{x}}
\newcommand{\Vall}{\VV_\mathsf{x}}
\newcommand{\Eall}{\EE_\mathsf{x}}
\newcommand{\Qcal}{\mathcal{Q}}
\newcommand{\TotalUni}{\textbf{TU}}
\newcommand{\TotalNon}{\textbf{TN}}
\newcommand{\IndivUni}{\textbf{IU}}
\newcommand{\Gcal}{\Gall}
\newcommand{\Cover}{\mathsf{cover}}
\newcommand{\@chapapp}{\relax}%
\newcommand*\samethanks[1][\value{footnote}]{\footnotemark[#1]}
\begin{document}

\mainmatter              
\title{Resource-Aware Algorithms for\\Distributed Loop Closure
Detection\\with Provable Performance Guarantees}
%
\titlerunning{}  
%
\author{Yulun Tian\thanks{Equal contribution.} 
\and Kasra Khosoussi\samethanks
\and Jonathan P.~How} 
\authorrunning{Tian et al.} 

\institute{Laboratory for Information and Decision Systems\\Massachusetts
	Institute of Technology\\Cambridge, MA, USA.\\
	\email{\{yulun,kasra,jhow\}@mit.edu}}

\maketitle              

\begin{abstract}
Inter-robot loop closure detection, e.g., for collaborative
simultaneous localization and mapping (CSLAM), is a fundamental capability for
many multirobot applications in GPS-denied regimes. In real-world scenarios,
this is a resource-intensive process that involves exchanging observations and
verifying potential matches. This poses severe challenges
especially for small-size and low-cost robots with various operational and
resource constraints that limit, e.g., energy consumption, communication
bandwidth, and computation capacity. This paper presents resource-aware
algorithms for distributed inter-robot loop closure detection. In particular, we
seek to select a subset of potential inter-robot loop closures that maximizes a
monotone submodular performance metric without exceeding computation and
communication budgets. We demonstrate that this problem is in general NP-hard,
and present efficient approximation algorithms with provable performance
guarantees.
A convex relaxation scheme is used to certify near-optimal performance of the
proposed framework in real and synthetic SLAM benchmarks.

\end{abstract}
\section{Introduction}
Multirobot systems provide efficient and sustainable solutions to many large-scale missions.
Inter-robot loop closure detection, e.g., for collaborative simultaneous localization and mapping (CSLAM), 
is a fundamental capability necessary for many such applications in GPS-denied environments.
Discovering inter-robot loop closures requires (i) exchanging observations
between rendezvousing robots, and (ii) collectively verifying potential matches.
In many real-world scenarios, this is a resource-intensive process with a large search space due to, e.g.,
perceptual ambiguity, infrequent rendezvous, and long-term missions \cite{cieslewski2017efficient,ila2010information,kasra18ijrr,Giamou18_ICRA}. 
This task becomes especially challenging for prevalent small-size and low-cost platforms that are subject to
various operational or resource constraints such as limited battery, low-bandwidth communication, and limited computation capacity.
It is thus crucial for such robots to be able to seamlessly adapt to such constraints and intelligently utilize available on-board resources.
Such flexibility also enables robots to
explore the underlying trade-off between resource consumption and performance,
which ultimately can be exploited to save mission-critical resources. 
Typical ad hoc schemes and heuristics only offer partial remedies and often suffer from arbitrarily poor worst-case performance.
This thus motivates the design of reliable \emph{resource-aware} frameworks that provide performance guarantees.

%
This paper presents such a resource-aware framework 
for distributed inter-robot loop closure detection. 
More specifically, given budgets on
computation (i.e., number of verification attempts) and communication
(i.e., total amount of data transmission), we seek to select a
budget-feasible subset of potential loop closures that maximizes a
monotone submodular performance metric. 
To verify a potential inter-robot loop closure, at least one of the corresponding two robots
must share its observation (e.g., image keypoints or laser scan) with the other robot. 
Thus, we need to address the following questions simultaneously: 
(i) which feasible subset of observations should robots share with each other?
And (ii) which feasible subset of potential loop closures should be selected for verification?
This problem generalizes previously studied NP-hard problems that only consider budgeted computation (unbounded communication) \cite{kasra16wafr,kasra18ijrr} or vice versa \cite{tian18}, and is therefore NP-hard. 
Furthermore, algorithms proposed in these works are incapable of 
accounting for the impact of their decisions on both resources,
and thus suffer from arbitrarily poor worst-case performance in real-world scenarios.
In this paper, we provide simple efficient approximation algorithms 
with provable performance guarantees under budgeted computation and communication.

\subsubsection*{Contributions.} Our algorithmic contributions for the 
resource-aware distributed loop closure detection problem are the following:
\begin{enumerate}
  \item A constant-factor approximation algorithm for maximizing the expected number of
	true loop closures subject to computation and communication budgets. The established performance guarantee carries
	over to any modular objective.
  \item Approximation algorithms for general monotone submodular performance metrics. The established performance guarantees depend on 
  	available/allocated resources as well as the extent of perceptual ambiguity
	and problem size.
\end{enumerate}
We perform extensive evaluations of the proposed algorithms using real and synthetic SLAM benchmarks, and use 
a convex relaxation scheme to certify near-optimality 
in experiments where computing optimal solution by exhaustive search is infeasible.

\subsubsection*{Related Work.}
Resource-efficient CSLAM in general
\cite{choudhary2017,paull2016unified}, and data-efficient
distributed inter-robot loop closure detection
\cite{Giamou18_ICRA,CieslewskiChoudhary17,cieslewski2017efficient,tian18,choudhary2017}
in particular have been active areas of research in recent years.  
\citet{cieslewski2017efficient} and \citet{CieslewskiChoudhary17}
propose effective heuristics to reduce data transmission in the so-called ``online query''
phase of search for \emph{potential} inter-robot loop closures, during which robots
exchange \emph{partial} queries and use them to search for promising potential matches. 
Efficiency of the more 
resource-intensive phase of \emph{full} query exchange (e.g., 
full image keypoints, point clouds) is addressed in \cite{Giamou18_ICRA}.
\citet{Giamou18_ICRA} study the optimal \emph{lossless} data exchange problem
between a pair of rendezvousing robots. In particular, they show that the optimal
exchange policy is closely related to the minimum weighted vertex cover of the
so-called \emph{exchange graph} (Figure~\ref{fig:gex}); this was later extended to general
$r$-rendezvous \cite{tian18}.
None of the abovementioned works, however, consider explicit budget constraints.

In \cite{tian18} we consider the \emph{budgeted} data exchange
problem ($b$-DEP) in which robots are subject to a communication budget. 
Specifically, \cite{tian18} provides provably near-optimal approximation
algorithms for maximizing a monotone submodular performance metric subject to
such communication budgets. 
On the other hand, 
prior works on measurement selection in
SLAM \cite{kasra16wafr,kasra18ijrr,carlone2017attention} (rooted in
\cite{davison2005active,ila2010information}) under computation budgets
provide similar performance
guarantees for cases where robots are subject to a cardinality
constraint on, e.g., number of edges added to the pose graph, 
or number of verification attempts to discover loop closures \cite{kasra16wafr,kasra18ijrr}.
Note that bounding either quantity also helps to reduce the computational cost of solving the underlying inference problem. 
The abovementioned works assume unbounded communication or computation.
In real-world applications, however, robots need to seamlessly adapt to \emph{both} budgets.
The present work addresses this need by presenting resource-aware algorithms 
with provable performance guarantees that can operate in such regimes.

\subsubsection*{Notation and Preliminaries}
Bold lower-case and upper-case letters are reserved for vectors and matrices, respectively.
A set function $f: 2^{\Wcal} \to \Rset_{\geq 0}$ for a finite $\Wcal$ is
\emph{normalized}, \emph{monotone}, and \emph{submodular} (NMS) if it satisfies the following
properties:
(i) normalized: $f(\varnothing) = 0$;
(ii) monotone: for any $\Ecal \subseteq \Bcal$, $f(\Acal) \leq f(\Bcal)$;
and (iii) submodular: $f(\Acal) + f(\Bcal) \geq f(\Acal \cup \Bcal) + f(\Acal
\cap \Bcal)$ for any $\Acal, \Bcal \subseteq \Wcal$.
In addition, $f$ is called \emph{modular} if both $f$ and $-f$ are submodular.
For any set of edges $\EE$, $\Cover(\EE)$
denotes the set of all vertex covers of $\EE$.
For any set of vertices $\VV$, $\edg(\VV)$ denotes the set of all edges incident to at least one vertex in $\VV$.
Finally, $\uplus$ denotes the union of disjoint sets, i.e., $\Acal \uplus \Bcal
= \Acal \cup \Bcal$ and implies $\Acal \cap \Bcal = \varnothing$.

\section{Problem Statement}
\label{sec:problemDefinition}


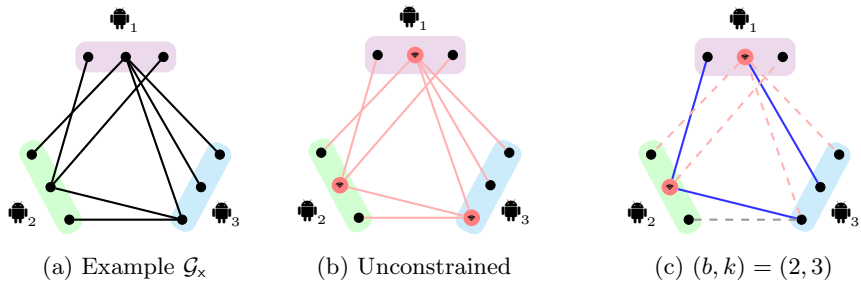
\begin{figure}[t]
	\centering
	\hspace*{\fill}%
	\begin{subfigure}[t]{0.29\textwidth}
		\centering
		\begin{tikzpicture}[scale=1]
		\tikzstyle{vertex}=[circle,fill,scale=0.4,draw]
		\tikzstyle{special vertex}=[circle,fill=red,scale=0.4,draw]
		\tikzstyle{square vertex}=[rectangle,fill,scale=0.5,draw]
		\tikzstyle{diamond vertex}=[regular polygon,regular polygon
		sides=3,rotate=45,fill,scale=0.3,draw]
		\node[vertex] at (1,2.598076) (a1) {};
		\node[vertex] at (1.5,2.598076) (a2) {};
		\node[vertex] at (2,2.598076) (a3) {};
		\node[vertex] at (0.25,1.299037) (b1) {};
		\node[vertex] at (0.5,0.866025) (b2) {};
		\node[vertex] at (0.75, 0.433013) (b3) {};
		\node[vertex] at (2.25,0.433013) (c1) {};
		\node[vertex] at (2.5,0.866025) (c2) {};
		\node[vertex] at (2.75,1.299037) (c3) {};
		\node (r1) at (1.5,3.098076) [] {\faAndroid$_{_1}$};
		\node (r2) at (0.1464466,0.51247) [] {\faAndroid$_{_2}$};
		\node (r3) at (2.85355, 0.51247) [] {\faAndroid$_{_3}$};
		\draw[thick](a1) -- (b2);
		\draw[thick](a2) -- (b1);
		\draw[thick](a2) -- (c2);
		\draw[thick](a2) -- (c3);
		\draw[thick](b2) -- (c1);
		\draw[thick](c1) -- (a2);
		\draw[thick](b2) -- (a3);	  
		\draw[thick](b3) -- (c1);
		\begin{pgfonlayer}{background}
			\node[fit=(a1)(a2)(a3),rounded corners,fill=violet!15,inner xsep=3pt,
	inner ysep=4pt] {};
			\node[fit=(b1)(b2)(b3),rounded corners,fill=green!18,inner xsep=-4pt,
	inner ysep=5pt,
			rotate=28] {};
			\node[fit=(c1)(c2)(c3),rounded corners,fill=cyan!18,inner xsep=-4pt,
	inner ysep=4pt,
			rotate=152] {};
		\end{pgfonlayer}
		\end{tikzpicture}
		\caption{Example $\Gcal$}
		\label{fig:gex}
	  \end{subfigure}\hspace{0.2cm}
	\hfill
	\begin{subfigure}[t]{0.29\textwidth}
		\centering
		\begin{tikzpicture}[scale=1]
		\tikzstyle{vertex}=[circle,fill,scale=0.4,draw]
		\tikzstyle{special vertex}=[circle,fill=red!50,scale=0.3]
		\tikzstyle{square vertex}=[rectangle,fill,scale=0.5,draw]
		\tikzstyle{diamond vertex}=[regular polygon,regular polygon
		sides=3,rotate=45,fill,scale=0.3,draw]
		\node[vertex] at (1,2.598076) (a1) {};
		\node[special vertex] at (1.5,2.598076) (a2) {\faWifi};
		\node[vertex] at (2,2.598076) (a3) {};
		\node[vertex] at (0.25,1.299037) (b1) {};
		\node[special vertex] at (0.5,0.866025) (b2) {\faWifi};
		\node[vertex] at (0.75, 0.433013) (b3) {};
		\node[special vertex] at (2.25,0.433013) (c1) {\faWifi};
		\node[vertex] at (2.5,0.866025) (c2) {};
		\node[vertex] at (2.75,1.299037) (c3) {};
		\node (r1) at (1.5,3.098076) [] {\faAndroid$_{_1}$};
		\node (r2) at (0.1464466,0.51247) [] {\faAndroid$_{_2}$};
		\node (r3) at (2.85355, 0.51247) [] {\faAndroid$_{_3}$};
		\draw[thick,red!30](a1) -- (b2);
		\draw[thick,red!30](a2) -- (b1);
		\draw[thick,red!30](a2) -- (c2);
		\draw[thick,red!30](a2) -- (c3);
		\draw[thick,red!30](b2) -- (c1);
		\draw[thick,red!30](c1) -- (a2);
		\draw[thick,red!30](b2) -- (a3);	  
		\draw[thick,red!30](b3) -- (c1);
		\begin{pgfonlayer}{background}
			\node[fit=(a1)(a2)(a3),rounded corners,fill=violet!15,inner xsep=3pt,
	inner ysep=4pt] {};
			\node[fit=(b1)(b2)(b3),rounded corners,fill=green!18,inner xsep=-4pt,
	inner ysep=5pt,
			rotate=28] {};
			\node[fit=(c1)(c2)(c3),rounded corners,fill=cyan!18,inner xsep=-4pt,
	inner ysep=4pt,
			rotate=152] {};
		\end{pgfonlayer}
		\end{tikzpicture}
		\caption{Unconstrained}
		\label{fig:policy}
	  \end{subfigure} \hspace{0.2cm}
	\hfill
	\begin{subfigure}[t]{0.36\textwidth}
		\centering
		\begin{tikzpicture}[scale=1]
			\tikzstyle{vertex}=[circle,fill,scale=0.4,draw]
			\tikzstyle{special vertex}=[circle,fill=red!50,scale=0.3]
			\tikzstyle{square vertex}=[rectangle,fill,scale=0.5,draw]
			\tikzstyle{diamond vertex}=[regular polygon,regular polygon
			sides=3,rotate=45,fill,scale=0.3,draw]
			\node[vertex] at (1,2.598076) (a1) {};
			\node[special vertex] at (1.5,2.598076) (a2) {\faWifi};
			\node[vertex] at (2,2.598076) (a3) {};
			\node[vertex] at (0.25,1.299037) (b1) {};
			\node[special vertex] at (0.5,0.866025) (b2) {\faWifi};
			\node[vertex] at (0.75, 0.433013) (b3) {};
			\node[vertex] at (2.25,0.433013) (c1) {};
			\node[vertex] at (2.5,0.866025) (c2) {};
			\node[vertex] at (2.75,1.299037) (c3) {};
			\node (r1) at (1.5,3.098076) [] {\faAndroid$_{_1}$};
			\node (r2) at (0.1464466,0.51247) [] {\faAndroid$_{_2}$};
			\node (r3) at (2.85355, 0.51247) [] {\faAndroid$_{_3}$};
			\draw[thick,blue!80](a1) -- (b2);
			\draw[thick,red!30,dashed](a2) -- (b1);
			\draw[thick,blue!80](a2) -- (c2);
			\draw[thick,red!30,dashed](a2) -- (c3);
			\draw[thick,blue!80](b2) -- (c1);
			\draw[thick,red!30,dashed](c1) -- (a2);
			\draw[thick,red!30,dashed](b2) -- (a3);	  
			\draw[thick,black!40,dashed](b3) -- (c1);
			
			\begin{pgfonlayer}{background}
					\node[fit=(a1)(a2)(a3),rounded corners,fill=violet!15,inner xsep=3pt,
			inner ysep=4pt] {};
					\node[fit=(b1)(b2)(b3),rounded corners,fill=green!18,inner xsep=-4pt,
			inner ysep=5pt,
					rotate=28] {};
					\node[fit=(c1)(c2)(c3),rounded corners,fill=cyan!18,inner xsep=-4pt,
			inner ysep=4pt,
					rotate=152] {};
				\end{pgfonlayer}
		\end{tikzpicture}
		\caption{$(\kcomm,\kcomp) = (2,3)$}
		\label{fig:codesign}
	\end{subfigure}
	\hspace*{\fill}%
	\caption{\small (a) An example exchange graph
	$\Gcal$ in a $3$-rendezvous where each robot owns three observations
	(vertices). 
	Each potential inter-robot loop closure can be verified, if at least one
	robot shares its observation with the other robot.
	(b) In the absence of any resource constraint, the robots can collectively
	verify all potential loop closures. The optimal lossless exchange policy
	\cite{Giamou18_ICRA} corresponds to sharing vertices of a
	minimum vertex cover, in this case
	the $3$ vertices marked in red.  (c) Now if the robots are only
	permitted to exchange at most $2$ vertices ($\kcomm = 2$) and verify at most
	$3$ edges ($\kcomp = 3$), they must decide which subset of observations to
	share (marked in red), and which subset of potential loop closures to verify
	(marked in blue). Note that these two subproblems are tightly coupled, as
	the selected edges must be covered by the selected vertices.
  	}
	\label{fig:diagram}
\end{figure}

We consider the distributed loop closure detection problem 
during a multi-robot rendezvous.
Formally, an
\mbox{$r$-rendezvous} \cite{tian18} refers to a configuration where $r \geq 2$
robots are situated such that every robot can receive data broadcasted
by every other robot in the team. Each robot arrives at the rendezvous 
with a collection of sensory observations (e.g., images or laser scans) acquired throughout its
mission at different times and locations. 
Our goal is to discover associations (loop closures) between observations owned by different robots.
A \emph{distributed} framework for inter-robot loop closure detection divides
the computational burden
among different robots,
and thus enjoys several advantages
over centralized schemes (i.e., sending all observations to one node) including
reduced data transmission, improved flexibility, and robustness; see, e.g.,
\cite{Giamou18_ICRA,cieslewski2017efficient}. In these frameworks, robots
first exchange a compact representation (\emph{metadata} in \cite{Giamou18_ICRA})
of their observations in the form of, e.g., appearance-based feature vectors \cite{cieslewski2017efficient,Giamou18_ICRA}
or spatial clues (i.e., estimated location with uncertainty) \cite{Giamou18_ICRA}.
The use of metadata can help robots to efficiently search for identify a set of
\emph{potential} inter-robot loop closures by searching among their collected
observations.
For example, in many state-of-the-art SLAM systems (e.g., \cite{murORB2}), 
a pair of observations is declared a potential loop closure if the similarity score computed from matching the corresponding metadata 
(e.g., bag-of-words vectors) is above a threshold.\footnote{
As the fidelity of metadata increases, 
the similarity score becomes more effective in identifying true loop closures.
However, this typically comes at the cost of increased data
transmission during metadata exchange. In this paper we do not take into
account the cost of forming the exchange graph which is inevitable for optimal
data exchange \cite{Giamou18_ICRA}.}
The similarity score can also be used to \emph{estimate} the probability that a potential match is a true loop closure.
The set of potential loop closures identified from metadata exchange can be naturally represented using an $r$-partite
\emph{exchange graph} \cite{tian18,Giamou18_ICRA}.

\begin{definition}[Exchange Graph]
  \label{def:exchange_graph}
  \normalfont
  An exchange graph \cite{tian18,Giamou18_ICRA} between $r$ robots is a simple
  undirected $r$-partite graph $\Gall = (\Vall,\Eall)$ where each vertex $v
  \in \Vall$ corresponds to an observation collected by one robot at a
  particular time. The vertex set can be partitioned into $r$
  (self-independent) sets $\Vall = \VV_1 \uplus \cdots \uplus \VV_r$.  Each
  edge $\{u,v\} \in \Eall$ denotes a \emph{potential} inter-robot loop
  closure 
  identified by matching the corresponding metadata (here, $u \in \VV_i$ and $v \in \VV_j$).
  $\Gall$ is endowed with $w : \Vall \to \Rset_{>0}$ and $p : \Eall
\to [0,1]$ that quantify the size of each observation (e.g.,
bytes, number of keypoints in an image, etc), and the probability that an edge
corresponds to a true loop closure (independent of other edges), respectively.
\end{definition}

Even with high fidelity metadata, the set of potential loop closures $\Eall$ typically contains 
many false positives. Hence, it is essential that the robots collectively \emph{verify}
all potential loop closures. Furthermore, for each true loop closure that passes the verification step,
we also need to compute the relative transformation between the corresponding poses for the purpose of e.g., 
pose-graph optimization in CSLAM. 
For visual observations, these can be done by performing the so-called \emph{geometric verification},
which typically involves RANSAC iterations to obtain keypoints correspondences and an initial transformation,
followed by an optimization step to refine the initial estimate; see e.g., \cite{murORB2}.
Although geometric verification can be performed relatively efficiently, it can still become the computational 
bottleneck of the entire system in large problem instances \cite{heinly2015,Raguram2012BMVC}. In our case,
this corresponds to a large exchange graph (e.g., due to perceptual ambiguity and infrequent rendezvous). 
Verifying all potential loop closures in this case can exceed resource budgets on, e.g., energy consumption or CPU time.
Thus, from a resource scheduling perspective, it is natural to select an information-rich subset of 
potential loop closures $\Ecal \subseteq \Eall$ for verification.
Assuming that the cost of geometric verification is uniform among different
potential matches, 
we impose a computation budget by requiring that the selected subset (of edges
in $\Gall$) must contain no more than $k$ edges,
i.e., $|\Ecal| \leq k$.

In addition to the computation cost of geometric verification, robots also incur communication cost
when establishing inter-robot loop closures. More specifically,
before two robots can verify a potential loop closure, at least one of them must \emph{share} its observation with the
other robot. It has been shown that the minimum
data transmission required to verify any subset of edges $\EE \subseteq \Eall$
is determined by the minimum weighted vertex cover of the subgraph induced by
$\EE$ \cite{Giamou18_ICRA,tian18}.\footnote{Selecting a vertex is equivalent to
broadcasting the corresponding observation; see Figure~\ref{fig:policy}.} 
Based on this insight, we consider
three different models for communication budgets in Table~\ref{tab:gcomm}.
First, in Total-Uniform (\TotalUni{}) robots are allowed to exchange at most $b$
observations. This is justified under the assumption of uniform vertex weight (i.e. observation size) $w$. This assumption is relaxed in
Total-Nonuniform (\TotalNon{}) where total data transmission must be at most
$b$. Finally, in Individual-Uniform (\IndivUni{}), we assume $\Vall$ is
partitioned into $p$ blocks and robots are allowed to broadcast at most $b_i$
observations from the $i$th block for all $i \in [p]$. A natural partitioning of
$\Vall$ is given by $\VV_1 \uplus \cdots \uplus \VV_r$. In this case, \IndivUni{}
permits robot $i$ to broadcast at most $b_i$ of its observations for all
$i \in [r]$. This model captures the heterogeneous nature of the team.

Given the computation and communication budgets described above, robots must decide
which budget-feasible subset of potential edges to verify in order to maximize a
collective performance metric $\fe : 2^{\Eall} \to \Rset_{\geq 0}$.
$\fe(\EE)$ aims to quantify the \emph{expected utility} gained by verifying
the potential loop closures in $\EE$. For concrete examples of
$\fe$, see Sections~\ref{sec:modular} and~\ref{sec:submodular} where we 
introduce three choices borrowed from \cite{tian18}.
This problem is formally defined below.

\begin{table}[t]
	\setlength{\tabcolsep}{13pt}
	\renewcommand{\arraystretch}{1.5}
	\caption{A subset of edges is feasible with regards to
	  communication budget if there exists a $\Vcal \subseteq \Vall$ that covers
	  that subset and $\Vcal$ satisfies the corresponding constraint; see \ref{eq:codesign}.}
	\centering
	\begin{tabular}{c||ccc}
		\hline
		\hline
		Type & \textbf{TU}$_b$ & 
		\textbf{TN}$_b$ & \textbf{IU}$_{b_{1:p}}$ \\
		\hline\hline
		\multirow{2}{*}{Constraint} & $|\Vcal| \leq b$                & $\sum_{v \in
		\Vcal} w(v) \leq b$   & $|\Vcal \cap \VV_i| \leq b_i$ for
		$i \in [p]$\\
		 & Cardinality & Knapsack & Partition Matroid\\
		\hline
		\hline
	\end{tabular}
	\label{tab:gcomm}
\end{table}

\begin{problem}
  Let \textbf{CB} $\in
  \{\TotalUni{}_{b},\TotalNon{}_{b},\IndivUni{}_{b_{1:p}}\}$.
\begin{equation}
	\normalfont
	\begin{aligned}
		& \underset{\EE \subseteq \Eall}{\text{maximize}}
		& & \fe(\EE)\\
		& \text{subject to}
		&&	|\EE| \leq k, && (\text{\color{black!70!white}
	  \# of verifications}) \\
		  &   &   & \exists \, \Vcal \in \Cover(\EE) \text{ satisfying
			\textbf{CB}}. && (\text{\color{black!70!white}
		  data transmission})
	\end{aligned}
	\label{eq:codesign}
	\tag{$\text{P}_1$}
\end{equation}
\label{prob:codesign}
\end{problem}

\ref{eq:codesign} generalizes NP-hard problems, and thus is NP-hard in general.
In particular, for an NMS $\fe$ and when $b$ or $b_i$'s are sufficiently large
(i.e., unbounded communication), \ref{eq:codesign} becomes an instance of
general NMS maximization under a cardinality
constraint.  Similarly, for an NMS $\fe$ and a sufficiently large $k$ (i.e.,
unbounded computation), this problem reduces to a variant of $b$-DEP
\cite[Section 3]{tian18}. For general NMS maximization under a cardinality
constraint, no polynomial-time approximation algorithm can provide a constant
factor approximation better than $1-1/e$, unless P$=$NP; see \cite{krauseSurvey}
and references therein for results on hardness of approximation.
This immediately implies that $1-1/e$ is also the approximation barrier for the
general case of~\ref{eq:codesign} (i.e., general NMS objective).
In Sections~\ref{sec:modular} and
\ref{sec:submodular} we
present approximation algorithms with provable performance guarantees for
variants of this problem.

\section{Modular Performance Metrics}
\label{sec:modular}
In this section, we consider a special case of \ref{eq:codesign} where
$\fe$ is normalized, monotone, and \emph{modular}.  This immediately implies
that $\fe(\varnothing) = 0$ and $\fe(\EE) = \sum_{e \in \EE} \fe(e)$ for all
non-empty $\EE \subseteq \Eall$ where $\fe(e) \geq 0$ for all $e \in \Eall$. 
Without loss of generality, we focus on the case where $\fe(\EE)$ gives the expected number of true
inter-robot loop closures within $\EE$, i.e., $\fe : \EE \mapsto
\mathbb{E}\,[\text{number of correct matches within $\EE$}] = \sum_{e \in \EE} p(e)
\label{eq:maxprob_fe}$; see Definition~\ref{def:exchange_graph} and \cite[Eq.~$4$]{tian18}. This generic objective
is applicable to a broad range of scenarios where maximizing the expected
 number of ``true associations'' is desired (e.g., in distributed
place recognition). 
\ref{eq:codesign} with modular objectives generalizes the maximum coverage problem on graphs, and thus is NP-hard in general; see \cite{tian18}.

In what follows, we present
an efficient constant-factor approximation scheme for \ref{eq:codesign} with modular objectives under
the communication cost regimes listed in Table~\ref{tab:gcomm}.
Note that merely deciding whether a given $\EE \subseteq \Eall$ is
feasible for \ref{eq:codesign} under \TotalUni{} is an instance of vertex cover
problem \cite{tian18} which is NP-complete. 
We thus first transform~\ref{eq:codesign} into the following nested optimization
problem:
\begin{equation}
	\normalfont
	\begin{aligned}
	&\underset{\Vcal \subseteq \Vall}{\text{maximize}} 
	&& 
	\boxed{\underset{\EE
	\subseteq \edg(\Vcal),|\EE|\leq k}{\max}
	\hspace{.1cm} \fe(\EE)} \\
	& \text{subject to}
	&& \Vcal \text{ satisfies } \textbf{CB}.
	&&&
	\end{aligned}
	\tag{$\text{P}_2$}
	\label{prob:codesign_nested}
\end{equation}
Let $\fv : 2^{\Vall} \to \mathbb{R}_{\geq 0}$ return the optimal value of the inner optimization problem (boxed term)
and define $\fv(\varnothing) = 0$.
Note that $\fv(\VV)$ gives the maximum expected number of true inter-robot loop
closures achieved by broadcasting the observations associated to $\VV$ and
verifying at most $k$ potential inter-robot loop closures. In contrast to \ref{eq:codesign},
in \ref{prob:codesign_nested} we explicitly maximize $\fe$ over both
vertices and edges. This transformation reveals the inherent structure of our problem;
i.e., one needs to jointly decide which observations (vertices) to share (outer
problem), and which potential loop closures (edges) to verify among the set of
verifiable potential loop closures given the shared observations (inner
problem). For a modular $\fe$, it is easy to see that the inner problem admits a trivial solution
and hence $\fv$ can be efficiently computed: if $|\edg(\Vcal)| >
k$, return the sum of top $\kcomp$ edge probabilities in $\edg(\Vcal)$;
otherwise return the sum of all probabilities in $\edg(\Vcal)$. 
%
%
\begin{theorem}
	\normalfont
	$\fv$ is normalized, monotone, and submodular.
	\label{th:fvNMS}
\end{theorem}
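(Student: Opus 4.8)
The plan is to recognize $\fv$ as the composition of a \emph{truncated modular} set function on edges with the incidence map $\edg(\cdot)$, and to show that this composition inherits all three properties. Since $\fe$ is nonnegative and modular, the boxed inner maximization has the greedy solution noted in the text, so
\[ \fv(\Vcal) \;=\; h\bigl(\edg(\Vcal)\bigr), \qquad \text{where}\quad h(\EE) \;\coloneqq\; \max_{\EE' \subseteq \EE,\, |\EE'| \le k} \ \sum_{e \in \EE'} p(e) \]
is the sum of the $\min(k,|\EE|)$ largest edge probabilities in $\EE$, with $h(\varnothing) = 0$. Thus it suffices to prove (i) $h$ is normalized, monotone, and submodular on $2^{\Eall}$, and (ii) composing an NMS function with $\edg(\cdot)$ again yields an NMS function.

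For (i), normalization and monotonicity of $h$ are immediate from $p \ge 0$. For submodularity I would use the marginal-gain characterization: fix $\EE_1 \subseteq \EE_2 \subseteq \Eall$ and an edge $e \notin \EE_2$, and let $t_i$ be the $k$-th largest probability among the edges of $\EE_i$ (and $t_i \coloneqq 0$ if $|\EE_i| < k$). A one-line case split on whether $|\EE_i| < k$ shows $h(\EE_i \cup \{e\}) - h(\EE_i) = \max\{0,\, p(e) - t_i\}$; since $\EE_1 \subseteq \EE_2$ forces $t_1 \le t_2$, the marginal gain of $e$ is nonincreasing in the base set, which is exactly submodularity. This is the only step with real content.

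For (ii), the incidence map $\edg : 2^{\Vall} \to 2^{\Eall}$ is monotone, distributes over unions ($\edg(\Acal \cup \Bcal) = \edg(\Acal) \cup \edg(\Bcal)$), and sub-distributes over intersections ($\edg(\Acal \cap \Bcal) \subseteq \edg(\Acal) \cap \edg(\Bcal)$) --- the last being only a containment, since an edge can meet $\Acal$ and $\Bcal$ in distinct endpoints without meeting $\Acal \cap \Bcal$. Then $\fv(\varnothing) = h(\edg(\varnothing)) = h(\varnothing) = 0$, monotonicity of $\fv$ follows by composition, and for submodularity I would chain
\[ \fv(\Acal) + \fv(\Bcal) = h(\edg(\Acal)) + h(\edg(\Bcal)) \ge h(\edg(\Acal) \cup \edg(\Bcal)) + h(\edg(\Acal) \cap \edg(\Bcal)) \ge h(\edg(\Acal \cup \Bcal)) + h(\edg(\Acal \cap \Bcal)), \]
where the first inequality is submodularity of $h$ and the second combines $\edg(\Acal)\cup\edg(\Bcal) = \edg(\Acal \cup \Bcal)$ with monotonicity of $h$ applied to the containment $\edg(\Acal \cap \Bcal) \subseteq \edg(\Acal) \cap \edg(\Bcal)$; the right-hand side is exactly $\fv(\Acal \cup \Bcal) + \fv(\Acal \cap \Bcal)$.

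The main obstacle is the submodularity of the truncated modular function $h$ in part (i); everything else is bookkeeping. A subtlety in part (ii) worth recording in the write-up is that $\edg$ only \emph{sub}-distributes over intersections, so it is essential that $h$ be monotone --- not merely submodular --- for the final inequality chain to close.
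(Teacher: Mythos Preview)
Your argument is correct. The decomposition $\fv = h \circ \edg$ with $h$ the top-$k$ truncation of the modular objective is sound; the marginal-gain formula $h(\EE_i \cup \{e\}) - h(\EE_i) = \max\{0, p(e) - t_i\}$ and the monotonicity $t_1 \leq t_2$ of the $k$-th order statistic under set inclusion are both valid, and the closure-under-composition step in part (ii) is handled correctly, including the point that $\edg$ only sub-distributes over intersections and hence monotonicity of $h$ is genuinely needed.

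Your route differs from the paper's. The paper proves submodularity directly at the vertex level via an exchange argument: it pads $\edg(\Vcal)$ with $k$ null edges so that the optimal inner set always has cardinality exactly $k$, writes $\fv(\Qcal \cup \{v\}) - \fv(\Qcal)$ as $\sum_{\EE_\boxplus} p(e) - \sum_{\EE_\boxminus} p(e)$ where $\EE_\boxplus, \EE_\boxminus$ are the edges that enter and leave the top-$k$ set, and then exhibits a feasible $k$-subset for $\Scal \cup \{v\}$ by swapping $\EE_\boxplus$ into $\edg(\Scal;k)$ against its $|\EE_\boxplus|$ smallest elements $\EE_\boxminus^\ast$, finishing with the order-statistic comparison $\sum_{\EE_\boxminus} p(e) \geq \sum_{\EE_\boxminus^\ast} p(e)$. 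Your approach instead factors the problem into two reusable pieces: (i) the truncated-sum function $h$ is NMS on edges, and (ii) composing any monotone submodular edge function with $\edg$ yields an NMS vertex function. This is structurally cleaner and makes explicit a general lemma (your part (ii)) that the paper leaves implicit; the paper's direct exchange argument, on the other hand, avoids introducing the intermediate function $h$ and stays closer to the combinatorics of the top-$k$ selection.
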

%

Theorem~\ref{th:fvNMS} implies that \ref{prob:codesign_nested} is an instance
of classical monotone submodular maximization subject to a 
cardinality (\TotalUni{}), a knapsack
(\TotalNon{}), or a partition matroid (\IndivUni{}) constraint. These problems admit constant-factor approximation
algorithms. The best performance guarantee in all cases is $1-1/e$
(Table~\ref{tab:modular_apx}); i.e., in the worst case, the expected number of
correct loop closures discovered by such algorithms is no less than 
$1-1/e \approx 63\%$ of that of an optimal solution.
Among these algorithms, variants of the natural
greedy algorithm are particularly well-suited for our application due to their
computational efficiency and incremental nature; see also \cite{tian18}.  These
simple algorithms enjoy constant-factor approximation guarantees, albeit with a
performance guarantee weaker than $1-1/e$ in the case of \TotalNon{} and
\IndivUni{}; see the first row of Table~\ref{tab:modular_apx} and
\cite{krauseSurvey}. More precisely, under the \TotalUni{} regime, the standard
greedy algorithm that simply selects (i.e., broadcasts) the next remaining
vertex $v$ with the maximum marginal gain over expected number of true loop closures provides the
optimal approximation ratio \cite{nemhauser1978analysis}; see
Algorithm~\ref{alg:mgreedy} in \ref{app:algorithms}. 
A na\"{i}ve implementation of this algorithm requires $O(\kcomm\cdot |\Vall|)$ evaluations of $\fv$,
where each evaluation $\fv(\Vcal)$ takes $O(|\edg(\VV)| \times \log k)$ time.
We note that the number of evaluations can be reduced by using the
so-called lazy greedy method \cite{minoux1978accelerated, krauseSurvey}.
Under \TotalNon{}, the
same greedy algorithm, together with one of its variants that normalizes marginal
gains by vertex weights, provide a performance guarantee of $1/2 \cdot (1-1/e)$
\cite{leskovec2007cost}.
Finally, in the case of \IndivUni{}, selecting the next \emph{feasible} vertex
according to the standard greedy algorithm leads to a performance guarantee of $1/2$
\cite{fisher1978analysis}. The following theorem provides an
approximation-preserving reduction from \ref{eq:codesign} to
\ref{prob:codesign_nested}.

%


\begin{theorem}
  \normalfont 
  Given \textsf{ALG}, an $\alpha$-approximation algorithm for
  \ref{prob:codesign_nested} for an $\alpha \in (0,1)$, the following is an $\alpha$-approximation
  algorithm for \ref{eq:codesign}:
  \begin{enumerate}
	\item Run \textsf{ALG} on the corresponding instance of
	  \ref{prob:codesign_nested} to produce $\VV$.
	\item If $|\edg(\Vcal)| >
	  k$, return $\kcomp$ edges with highest probabilities in
	  $\edg(\Vcal)$; otherwise return the entire $\edg(\Vcal)$.
  \end{enumerate}
%
  \label{thm:modular_apx}
\end{theorem}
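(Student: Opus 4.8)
The plan is to establish the two directions of an approximation-preserving reduction between $\eqref{eq:codesign}$ and $\eqref{prob:codesign_nested}$, and then compose with \textsf{ALG}. First I would argue that the two problems have the \emph{same} optimal value, which is the crux. For the ``$\leq$'' direction: take an optimal solution $\EE^\star$ of $\eqref{eq:codesign}$; by feasibility there exists $\Vcal^\star \in \Cover(\EE^\star)$ satisfying \textbf{CB}. Since $\EE^\star \subseteq \edg(\Vcal^\star)$ and $|\EE^\star| \leq k$, the set $\EE^\star$ is a feasible competitor in the inner maximization defining $\fv(\Vcal^\star)$, so $\fv(\Vcal^\star) \geq \fe(\EE^\star) = \OPT_{\eqref{eq:codesign}}$; hence the optimal value of $\eqref{prob:codesign_nested}$ is at least $\OPT_{\eqref{eq:codesign}}$. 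For the ``$\geq$'' direction: take an optimal $\Vcal^\star$ for $\eqref{prob:codesign_nested}$ and let $\EE^\star$ be the inner maximizer, i.e.\ $\EE^\star \subseteq \edg(\Vcal^\star)$, $|\EE^\star|\leq k$, $\fe(\EE^\star) = \fv(\Vcal^\star)$. Then $\Vcal^\star \in \Cover(\EE^\star)$ (every edge of $\EE^\star$ is incident to $\Vcal^\star$ by definition of $\edg$) and $\Vcal^\star$ satisfies \textbf{CB}, so $\EE^\star$ is feasible for $\eqref{eq:codesign}$ with value $\fv(\Vcal^\star)$, giving $\OPT_{\eqref{eq:codesign}} \geq \OPT_{\eqref{prob:codesign_nested}}$. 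Combining, $\OPT_{\eqref{eq:codesign}} = \OPT_{\eqref{prob:codesign_nested}}$.

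Next I would check that the two-step procedure in the theorem statement actually outputs a feasible solution of $\eqref{eq:codesign}$ whose objective equals $\fv(\VV)$, where $\VV$ is the output of \textsf{ALG}. Let $\EE$ be the edge set returned in Step~2. In both branches $\EE \subseteq \edg(\VV)$ and $|\EE| \leq k$, and $\EE$ is precisely an optimal solution of the inner problem for $\Vcal = \VV$ --- this is exactly the ``trivial solution'' observation recorded just before Theorem~\ref{th:fvNMS}: for modular $\fe$, top-$k$ edge probabilities (or all of $\edg(\VV)$ when $|\edg(\VV)|\leq k$) is inner-optimal. Hence $\fe(\EE) = \fv(\VV)$. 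Moreover $\VV \in \Cover(\EE)$ and, since \textsf{ALG} runs on $\eqref{prob:codesign_nested}$, $\VV$ satisfies \textbf{CB}; therefore $\EE$ is feasible for $\eqref{eq:codesign}$.

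Finally I would chain the inequalities: by the $\alpha$-approximation guarantee of \textsf{ALG}, $\fv(\VV) \geq \alpha \cdot \OPT_{\eqref{prob:codesign_nested}}$; by the equality of optima, $\OPT_{\eqref{prob:codesign_nested}} = \OPT_{\eqref{eq:codesign}}$; and by the previous paragraph the returned $\EE$ is feasible with $\fe(\EE) = \fv(\VV)$. Putting these together, $\fe(\EE) \geq \alpha \cdot \OPT_{\eqref{eq:codesign}}$, which is the claimed $\alpha$-approximation for $\eqref{eq:codesign}$.

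The main obstacle is not any single step --- each is short --- but making sure the correspondence between feasible solutions of the two problems is stated airtightly, in particular that $\edg(\cdot)$ and $\Cover(\cdot)$ are exact inverses in the sense needed ($\EE \subseteq \edg(\Vcal) \iff \Vcal \in \Cover(\EE)$ up to the obvious asymmetry), and that the inner problem's optimum is genuinely attained by the explicit top-$k$ rule so that Step~2 loses nothing. I would also note that the argument uses modularity of $\fe$ \emph{only} through the closed-form inner solver; the optimum-preserving reduction itself holds for any monotone $\fe$, but without an efficiently computable inner oracle Step~2 would not be implementable, so the modular case is where the reduction becomes algorithmically useful.
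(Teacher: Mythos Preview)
Your proposal is correct and mirrors the paper's own argument essentially step for step: the paper factors out your ``$\OPT_{\eqref{eq:codesign}} = \OPT_{\eqref{prob:codesign_nested}}$'' claim as a separate lemma (Lemma~\ref{lem:opt_eq}) and your feasibility check as another (Lemma~\ref{lem:ecal_feasible}), then chains $\fe(\tilde{\Ecal}) = \fv(\VV) \geq \alpha\cdot\OPTv = \alpha\cdot\OPTe$ exactly as you do. Your closing remark that modularity is used only to make the inner solver explicit (and hence Step~2 implementable) is a correct and useful observation that the paper leaves implicit.
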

Theorem~\ref{thm:modular_apx} implies that any near-optimal solution for
\ref{prob:codesign_nested} can be used to construct an equally-good near-optimal solution for our original problem
\ref{eq:codesign} with the same approximation ratio.
The first row of Table~\ref{tab:modular_apx} summarizes the
approximation factors of the abovementioned greedy algorithms for various
communication regimes. Note that this reduction also holds for more sophisticated
$(1-1/e)$-approximation algorithms (Table~\ref{tab:modular_apx}).
%

\begin{remark}
\citet{Kulik2009MCP} study the so-called \emph{maximum coverage with packing constraint} (MCP),
which includes \ref{prob:codesign_nested} under \TotalUni{} (i.e., cardinality constraint) as a special case. 
Our approach differs from \cite{Kulik2009MCP} in the following three ways. 
Firstly, the algorithm proposed in \cite{Kulik2009MCP} achieves a performance
guarantee of $1 - 1/e$ for MCP by applying partial enumeration, which is
computationally expensive in practice. This additional complexity is due to a
knapsack constraint on edges (``items'' according to \cite{Kulik2009MCP}) which
is unnecessary in our application. As a result, the 
standard greedy algorithm retains the optimal performance guarantee for
\ref{prob:codesign_nested} without
any need for partial enumeration.
Secondly, in addition to \TotalUni, we study other models of communication budgets (i.e., \TotalNon{} and \IndivUni{}),
which leads to more general classes of constraints (i.e., knapsack and partition
matroid) that MCP does not consider. Lastly, besides providing efficient
approximation algorithms for \ref{prob:codesign_nested}, we further demonstrate
how such algorithms can be leveraged to solve the original problem
\ref{eq:codesign}
by establishing an approximation-preserving reduction (Theorem~\ref{thm:modular_apx}).
\end{remark}


\begin{table}[t]
	\setlength{\tabcolsep}{13pt}
	\renewcommand{\arraystretch}{1.5}
	\caption{Approximation ratio for modular objectives}
	\centering
	\begin{tabular}{c||ccc}
		\hline
		\hline
		Algorithm & \TotalUni{} & \TotalNon{} & \IndivUni{} \\
		\hline\hline
		\textsc{\small Greedy} & $1-1/e$ \cite{nemhauser1978analysis} &  $1/2 \cdot
			(1-1/e)$ \cite{leskovec2007cost} & $1/2$ \cite{fisher1978analysis} \\ \hline
			Best  & $1-1/e$ \cite{nemhauser1978analysis} & 
			$1-1/e$ \cite{sviridenko2004note}  & $1-1/e$ \cite{Calinescu2011}\\
		\hline
		\hline
	\end{tabular}
	\label{tab:modular_apx}
\end{table}

\section{Submodular Performance Metrics}
\label{sec:submodular}
Now we assume $\fe : 2^{\Eall} \to \Rset_{\geq 0}$ can be an arbitrary NMS
objective, while limiting our communication cost regime to \TotalUni{}.
From Section~\ref{sec:problemDefinition} recall that this problem is NP-hard in
general.
To the best of our knowledge, no prior work exists on approximation algorithms
for Problem~\ref{prob:codesign} with a general NMS objective.  Furthermore, the
approach presented for the modular case cannot be immediately extended to the
more general case of submodular objectives (e.g., evaluating the
generalized $g$ will be NP-hard). It is thus unclear whether any constant-factor (ideally, $1-1/e$) 
approximation can be achieved for an arbitrary NMS objective.

Before presenting our main results and approximation algorithms, let us briefly
introduce two such objectives suitable for our application; see also \cite{tian18}.
The D-optimality design criterion (hereafter, D-criterion),
defined as the log-determinant of the Fisher information matrix, is a popular
monotone submodular objective originated from the theory of optimal experimental design
\cite{Pukelsheim1993}.
This objective has been widely used across many domains (including SLAM) and 
enjoys rich geometric and information-theoretic interpretations; see, e.g.,
\cite{Joshi2009}. Assuming random edges of $\Gcal$ occur independently (i.e.,
potential loop closures ``realize'' independently), one can use 
the approximate expected gain in the D-criterion \cite{carlone2017attention}
\cite[Eq.~$2$]{tian18} as the objective in \ref{eq:codesign}.
This is known to be NMS if the information matrix prior
to rendezvous is positive definite \cite{tian18,shamaiah2010greedy}. 
In addition, in the case of 2D SLAM,
\citet{kasra16wafr,kasra18ijrr} show that the expected weighted number of
spanning trees (or, tree-connectivity) \cite[Eq.~$3$]{tian18} in the underlying
pose graph provides a graphical surrogate for the D-criterion with the advantage
of being cheaper to evaluate and requiring no metric knowledge about robots'
trajectories \cite{tian18}. Similar to the D-criterion, the expected tree-connectivity is
monotone submodular if the underlying pose graph is connected before selecting any potential
loop closure \cite{kasra18ijrr,kasra16wafr}. 


Now let us revisit \ref{eq:codesign}.
It is easy to see that for a sufficiently large communication budget
$b$, \ref{eq:codesign} becomes an instance of NMS maximization under only a
cardinality constraint on edges (i.e., computation budget). 
Indeed, when $k < b$, the communication constraint becomes
redundant---because $k$ or less edges can trivially be covered by selecting at most $k < b$
vertices. 
Therefore, in such
cases the standard greedy algorithm
operating on edges $\Eall$ achieves a $1-1/e$
performance guarantee \cite{nemhauser1978analysis}; see, e.g.,
\cite{kasra16wafr,kasra18ijrr,carlone2017attention} for similar works.
Similarly, for a sufficiently large computation budget $k$ (e.g., when $\kcomm <
\lfloor \kcomp / \Delta \rfloor$ where $\Delta$ is the maximum degree of
$\Gcal$), \ref{eq:codesign}
becomes an instance of budgeted data exchange problem for which there exists
constant-factor (e.g., $1-1/e$ under \TotalUni{}) approximation algorithms \cite{tian18}.
Such algorithms 
%
%
greedily select vertices, i.e., broadcast the corresponding observations and
select all edges incident to them \cite{tian18}.
Greedy edge (resp., vertex) selection thus provides $1-1/e$ approximation
guarantees for cases where $b$ is sufficiently smaller than $k$ and vice versa.
One can apply these algorithms on arbitrary instances of
\ref{eq:codesign} by picking edges/vertices greedily and stopping whenever at least one of the
two budget constraints is violated. Let us call the corresponding algorithms
\textsc{\small Edge-Greedy} (\textsc{\small E-Greedy}) and 
\textsc{\small Vertex-Greedy} (\textsc{\small V-Greedy}), respectively; see
Algorithm~\ref{alg:egreedy} and Algorithm~\ref{alg:vgreedy} in
\ref{app:algorithms}. Moreover, let
\textsc{\small Submodular-Greedy} (\textsc{\small S-Greedy}) be the algorithm
according to which
one runs both \textsc{\small V-Greedy} and \textsc{\small E-Greedy} and returns the
best solution among the two. A na\"{i}ve implementation of \textsc{\small S-Greedy} 
thus needs $O(b \cdot |\Vall| + k \cdot |\Eall|)$ evaluations of the objective.
For the D-criterion and tree-connectivity, each evaluation in principle takes $O(d^3)$ time where $d$ denotes the total number of robot poses.
In practice, the cubic dependence on $d$ can be eliminated by leveraging the sparse structure of the global pose graph.
This complexity can be further reduced by cleverly reusing Cholesky factors
in each round and utilizing rank-one updates; see \cite{tian18,kasra18ijrr}.
Once again, the number of function calls can be reduced by using the lazy greedy method.
The following theorem provides a performance guarantee for \textsc{\small S-Greedy} in terms of $b$, $k$, and
$\Delta$.
 

\begin{theorem}
  \normalfont
  \textsc{\small Submodular-Greedy} is an $\alpha(b,k,\Delta)$-approximation algorithm
  for \ref{eq:codesign} where
	$\alpha(b,k,\Delta) \triangleq 1-\exp\big(-\min\,\{1,\gamma\}\big)$ in which
	$\gamma \triangleq \max\,\{b/k,\lfloor k/\Delta \rfloor/b\}$.
  \label{thm:submodular_apx}
\end{theorem}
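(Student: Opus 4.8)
The plan is to analyze \textsc{\small V-Greedy} and \textsc{\small E-Greedy} separately, show that each achieves a $(1-e^{-c})$-approximation for an appropriate value of $c$ depending on how the budgets compare, and then take the better of the two. The key observation is that a greedy algorithm which runs for $t$ rounds on a monotone submodular function and would ideally run for $T$ rounds (to exhaust a cardinality-type constraint) still recovers at least a $\left(1-e^{-t/T}\right)$ fraction of the optimum of the $T$-round problem; this is the standard ``partial'' greedy bound obtained by unrolling the submodular marginal-gain recursion for only $t$ steps. Since the optimum of \ref{eq:codesign} is no larger than the optimum of either the pure-cardinality-on-edges problem (budget $k$) or the pure data-exchange problem (budget $b$), bounding \textsc{\small S-Greedy} against these two relaxations suffices.

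First I would handle \textsc{\small V-Greedy}. It greedily broadcasts vertices and takes all incident edges, stopping when either $b$ vertices have been used or $k$ edges have been selected. Running $b$ full vertex-rounds is exactly the $b$-DEP greedy, which is a $(1-1/e)$-approximation for the relaxed problem without the edge budget \cite{tian18}; since that relaxed optimum upper-bounds $\OPT$ of \ref{eq:codesign}, if \textsc{\small V-Greedy} completes all $b$ rounds we get $1-1/e \ge 1-e^{-\gamma}$. If instead it halts early because the edge budget $k$ is hit, then it has added at least $\lfloor k/\Delta \rfloor$ vertices before stopping (each vertex contributes at most $\Delta$ new edges, so fewer than $\lfloor k/\Delta\rfloor$ vertices cannot exhaust $k$). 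Applying the partial greedy bound with $t = \lfloor k/\Delta\rfloor$ rounds against the $T = b$-round data-exchange optimum gives a $\bigl(1-\exp(-\lfloor k/\Delta\rfloor / b)\bigr)$-approximation. Symmetrically, for \textsc{\small E-Greedy}, which greedily picks edges and stops on whichever of $k$ or $b$ is violated: if it completes $k$ edge-rounds it is the standard $(1-1/e)$ greedy for cardinality-constrained NMS maximization \cite{nemhauser1978analysis}; if it halts early because the vertex-cover budget $b$ is reached, it has selected at least $b$ edges (covering them needs at least one vertex per edge only in the worst case, but more carefully: fewer than $b$ edges can always be covered by fewer than $b$ vertices, so halting requires at least $b$ edges), and the partial greedy bound with $t = b$ against the $T = k$-round edge optimum yields a $\bigl(1-\exp(-b/k)\bigr)$-approximation.

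Taking the maximum over the two algorithms, \textsc{\small S-Greedy} achieves at least $1 - \exp\!\bigl(-\max\{\,b/k,\ \lfloor k/\Delta\rfloor/b\,\}\bigr)$, except that each individual bound is also capped by $1-1/e = 1-e^{-1}$ when the corresponding greedy runs to completion; hence the clean statement uses $1-\exp(-\min\{1,\gamma\})$ with $\gamma = \max\{b/k, \lfloor k/\Delta\rfloor/b\}$. I would close by checking the edge cases: when $k<b$ the communication constraint is vacuous and \textsc{\small E-Greedy} runs to completion giving exactly $1-1/e$, consistent with $\gamma \ge 1$; and when $b$ is small relative to $k/\Delta$, \textsc{\small V-Greedy} runs to completion, again giving $1-1/e$.

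The main obstacle is pinning down precisely \emph{why} the early-termination case forces a guaranteed number of completed greedy rounds — i.e., the combinatorial claims ``halting on the edge budget implies $\ge \lfloor k/\Delta\rfloor$ vertices were broadcast'' and ``halting on the vertex budget implies $\ge b$ edges were selected'' — and then verifying that the partial greedy inequality can legitimately be anchored to the \emph{relaxed} (single-budget) optimum rather than to $\OPT$ of \ref{eq:codesign} directly; this is valid because dropping a constraint only increases the optimum, but it must be stated carefully since the greedy trajectory lives in the constrained problem while the comparison optimum does not. A secondary subtlety is that \textsc{\small V-Greedy}'s marginal gains are over \emph{sets of edges} attached to a vertex, so one must confirm the function $g$-style argument (monotone submodularity is preserved under this vertex-to-edge lifting, as in the proof of Theorem~\ref{th:fvNMS}) to legitimately invoke the submodular greedy recursion on vertices.
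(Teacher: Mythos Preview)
Your proposal is correct and follows essentially the same route as the paper: it proves separate guarantees for \textsc{\small E-Greedy} and \textsc{\small V-Greedy} via the partial-greedy bound against the single-budget relaxations (Lemmas~\ref{lem:egrd_apx} and~\ref{lem:vgrd_apx}), using exactly the combinatorial counts $\min(b,k)$ and $\min(b,\lfloor k/\Delta\rfloor)$ for the number of completed rounds, and then takes the maximum. One small correction: the vertex-level function whose submodularity you need for \textsc{\small V-Greedy} is $h(\Vcal)=\fe(\edg(\Vcal))$, not the inner-max $g$ of Theorem~\ref{th:fvNMS}; its NMS property is the simpler fact (used in \cite{tian18}) that composing an NMS $\fe$ with the monotone coverage map $\Vcal\mapsto\edg(\Vcal)$ preserves submodularity.
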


The complementary nature of \textsc{\small E-Greedy} and \textsc{\small
V-Greedy} is reflected in the performance guarantee $\alpha(b,k,\Delta)$
presented above.  Intuitively, \textsc{\small E-Greedy} (resp., \textsc{\small
V-Greedy}) is expected to perform well when computation (resp., communication)
budget is scarce compared to communication (resp., computation) budget. It is
worth noting that for a specific instance of
$\Gcal$, the actual performance guarantee of \textsc{\small S-Greedy} can be higher than $\alpha(b,k,\Delta)$.
This potentially stronger performance guarantee can be
computed \emph{a posteriori}, i.e., after running \textsc{\small S-Greedy} on
the given instance of $\Gcal$; see Lemma~\ref{lem:egrd_apx} and
Lemma~\ref{lem:vgrd_apx} in \ref{app:proofs}.
As an example, Figure~\ref{fig:sgrd_apx_kitti} shows the \emph{a posteriori} 
approximation factors of \textsc{\small S-Greedy}
in the KITTI~00 dataset (Section~\ref{sec:experiments}) for each combination of
budgets $(\kcomm, \kcomp)$.
Theorem~\ref{thm:submodular_apx} indicates that reducing $\Delta$ enhances the
performance guarantee $\alpha(b,k,\Delta)$. This is demonstrated in
Figure~\ref{fig:sgrd_apx_kitti_maxdeg_5}: after capping $\Delta$ at $5$,
the minimum approximation factor increases from about $0.18$ to $0.36$.
In practice, $\Delta$ may be large due to high 
\emph{uncertainty} in the initial set of potential inter-robot loop closures;
e.g., in situations with high perceptual ambiguity, an observation could
potentially be matched to many other observations during the initial phase of
metadata exchange. 
This issue can be mitigated by bounding $\Delta$ or increasing the \emph{fidelity} of metadata.
\begin{figure}[t]
	\centering
	\hspace*{\fill}%
	\begin{subfigure}[t]{0.30\textwidth}
		\centering
		\includegraphics[width=\textwidth]{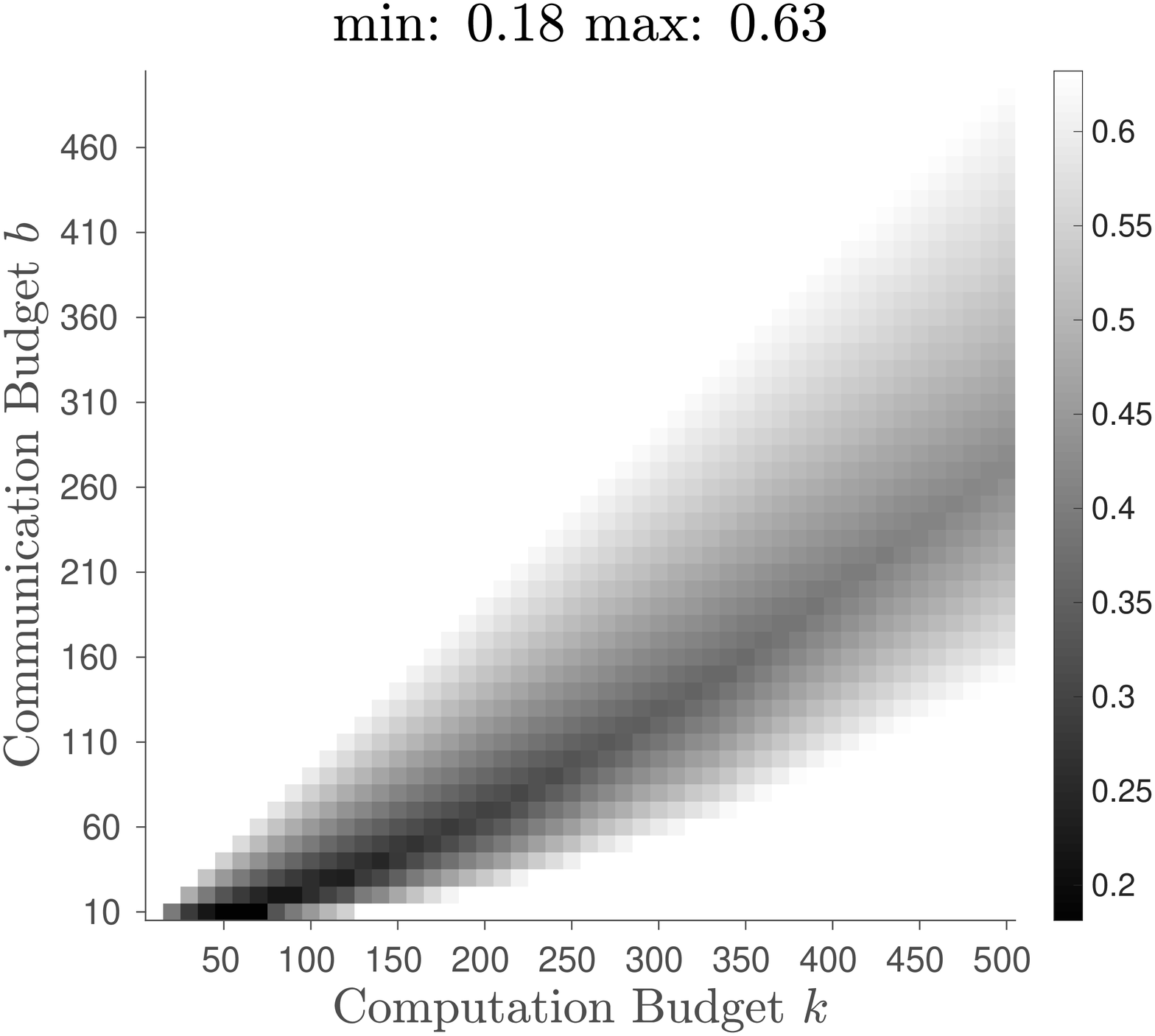}
		\caption{\small KITTI 00 ($\Delta=41$)}
		\label{fig:sgrd_apx_kitti}
	\end{subfigure}
	\hfill
	\begin{subfigure}[t]{0.30\textwidth}
		\centering
		\includegraphics[width=\textwidth]{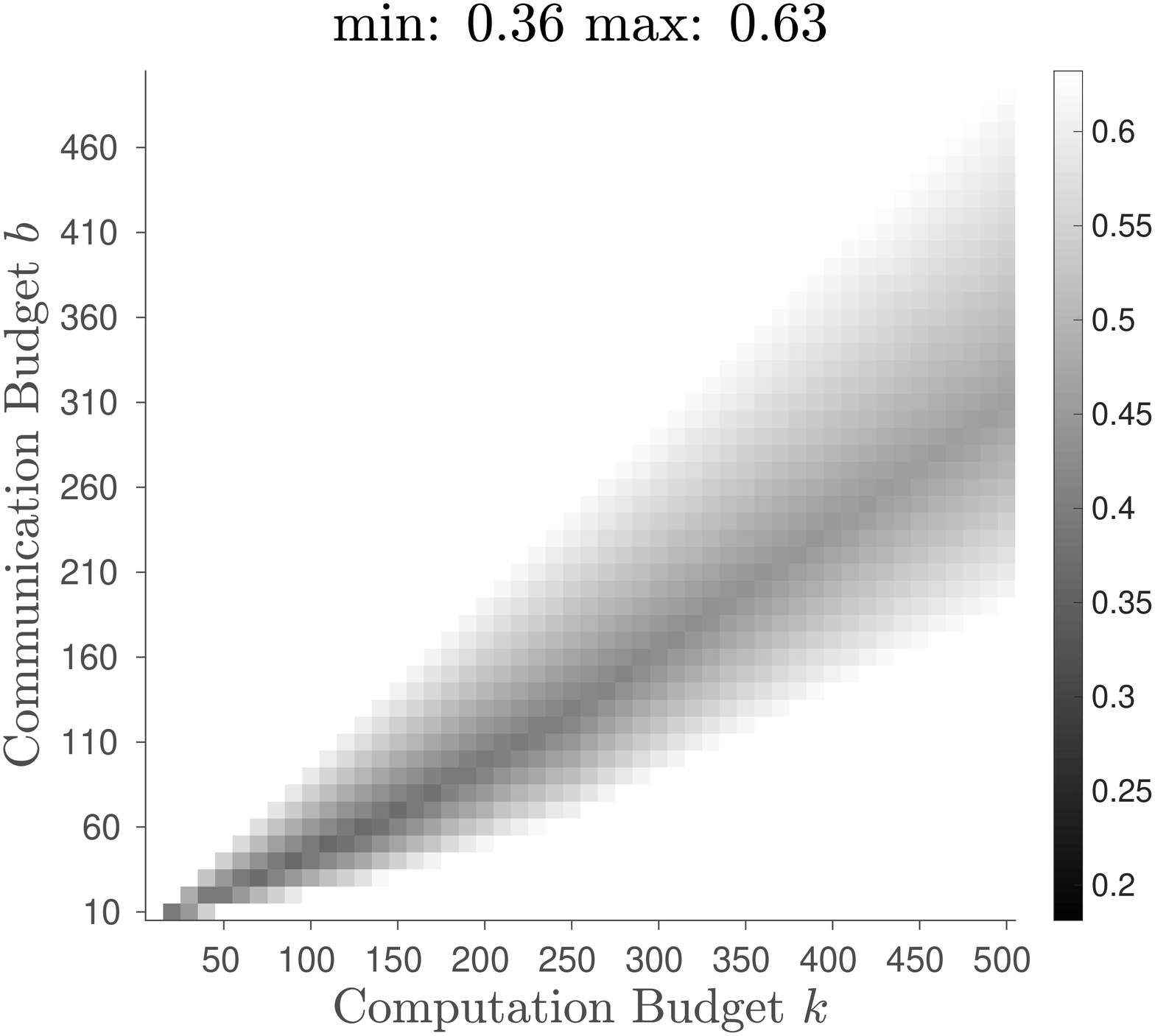}
		\caption{\small KITTI 00 ($\Delta=5$)} 
		\label{fig:sgrd_apx_kitti_maxdeg_5}
	\end{subfigure}
	\hfill
	\begin{subfigure}[t]{0.29\textwidth}
		\centering
		\includegraphics[width=\textwidth]{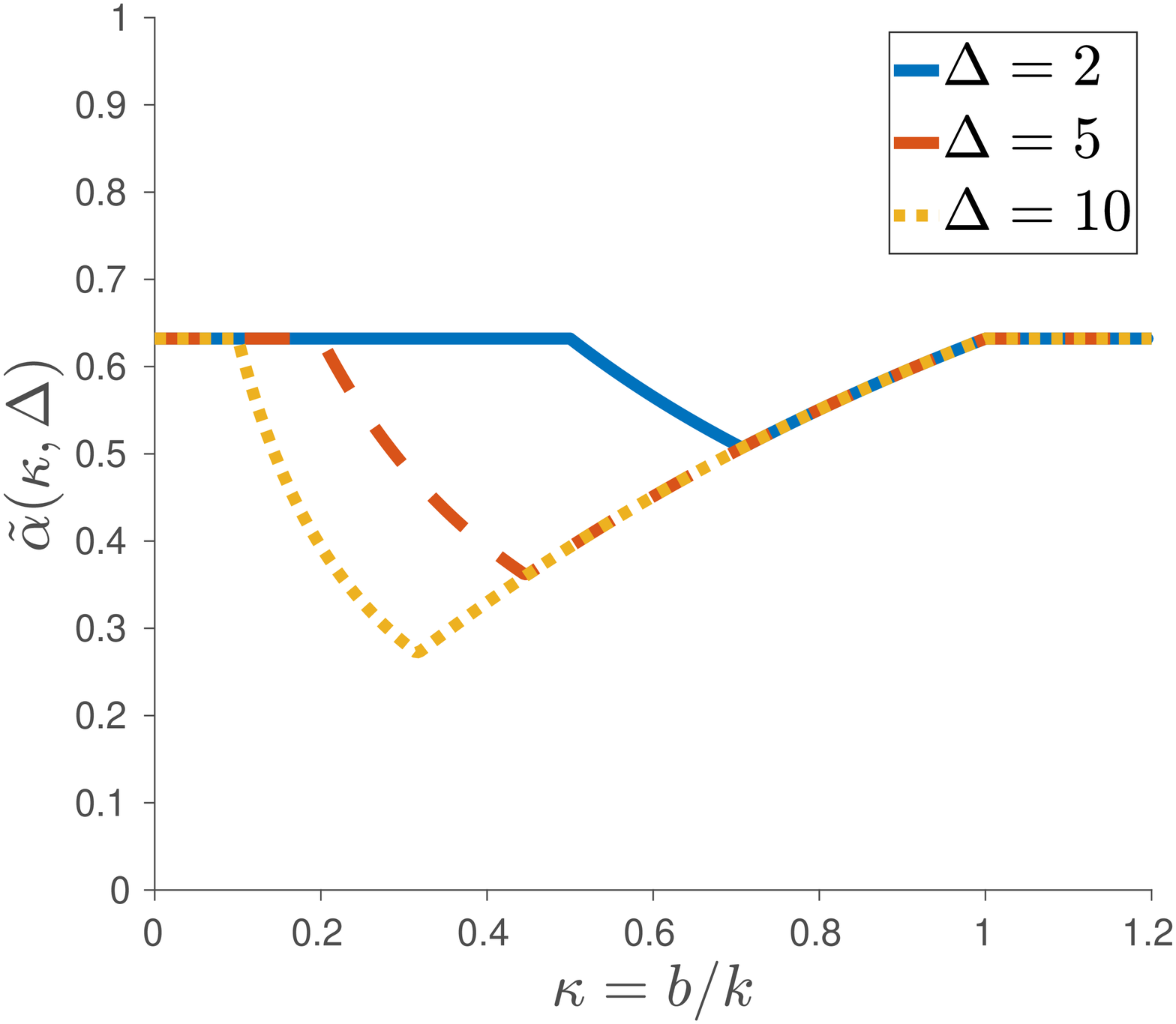}
		\caption{\small $\tilde{\alpha}(\kappa, \Delta)$}
		\label{fig:sgrd_apx_kappa}
	\end{subfigure}
	\hspace*{\fill}%
	\label{fig:sgrd_apx}
	\caption{\small  (a) \textsc{\small S-Greedy} approximation factor in
	KITTI~00 with $\Delta=41$. The approximation factor in this case varies between $0.18$ and $0.63$.
	(b) \textsc{\small S-Greedy} approximation factor in KITTI~00 with the maximum degree capped at $\Delta = 5$. 
	The approximation factor varies between $0.36$ and $0.63$.
	(c) The approximate \textsc{\small S-Greedy} performance guarantee as a function of $\kappa \triangleq \kcomm / \kcomp$ 
	with different $\Delta$.
  }
\end{figure} 
To gain more intuition, let us approximate
$\lfloor k/\Delta \rfloor$ in $\alpha(b,k,\Delta)$ with $k/\Delta$.\footnote{This is a reasonable
  approximation when, e.g., $b$ is sufficiently large ($b \geq b_0$) since $k/(\Delta b) - 1/b
  < {\lfloor k/\Delta \rfloor}/{b} \leq k/( \Delta b)$ and thus the introduced
error in the exponent will be at most $1/b_0$.}
%
%
%
%
%
%
With this simplification, the performance guarantee can be represented as
$\tilde{\alpha}(\kappa,\Delta)$, i.e., a function of the budgets ratio
$\kappa \triangleq \kcomm/\kcomp$ and $\Delta$. 
Figure~\ref{fig:sgrd_apx_kappa} shows $\tilde{\alpha}(\kappa,\Delta)$
as a function of $\kappa$, 
with different maximum degree $\Delta$ (independent of any specific $\Gcal$).

\begin{remark}
  It is worth noting that
  $\alpha(b,k,\Delta)$ can be bounded from below by
  a function of $\Delta$, i.e., independent of $b$ and $k$. More precisely,
  after some algebraic manipulation,\footnote{Omitted due to space
  limitation.}
  it can be shown that $\alpha(b,k,\Delta) \geq 1 - \exp(-c(\Delta))$ where $ 1/(\Delta+1) \leq c(\Delta) \leq
  1/\sqrt{\Delta}$.
  This implies that
  for a bounded $\Delta \leq \Delta_\text{max}$, \textsc{\small S-Greedy} is a constant-factor
  approximation algorithm for \ref{eq:codesign}.
\end{remark}

\section{Experimental Results}
\label{sec:experiments}


\begin{figure}[t]
	\centering
	\hspace*{\fill}%
	\begin{minipage}{.38\linewidth}
	  \vspace{-1.7cm}
	  \caption{\small Left: KITTI 00; Right: 2D simulation \cite{tian18}. 
	  Each figure shows simulated trajectories of five robots. 
	  The KITTI trajectories shown are estimated purely using prior beliefs and odometry measurements 
	  (hence the drift). 
	The simulated trajectories shown are the exact ground truth.}
	\label{fig:basekkkkk}
  \end{minipage}
	\hfill%
	\begin{subfigure}[t]{0.28\textwidth}
		\centering
		\includegraphics[width=\textwidth]{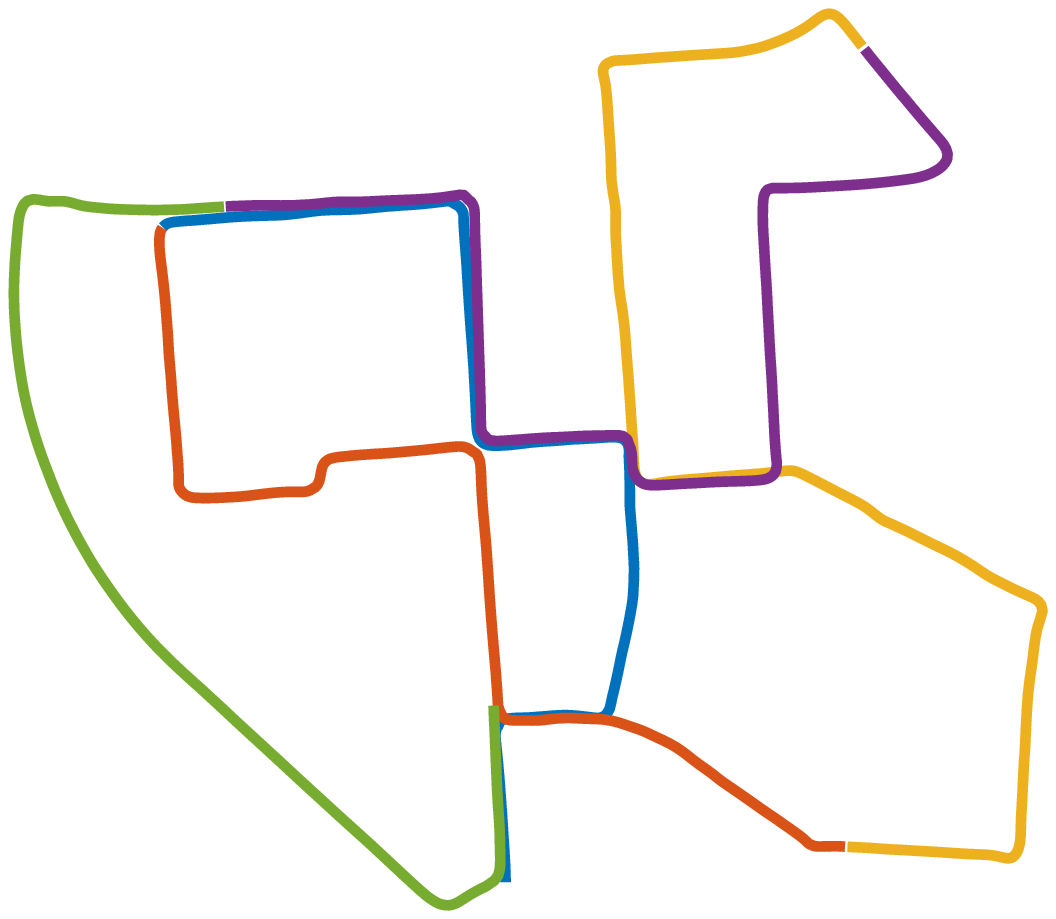}
		\caption{\small KITTI 00}
		\label{fig:KITTI_00_base_graph}
	\end{subfigure}
	\hfill
	\begin{subfigure}[t]{0.28\textwidth}
		\centering
		\includegraphics[width=\textwidth]{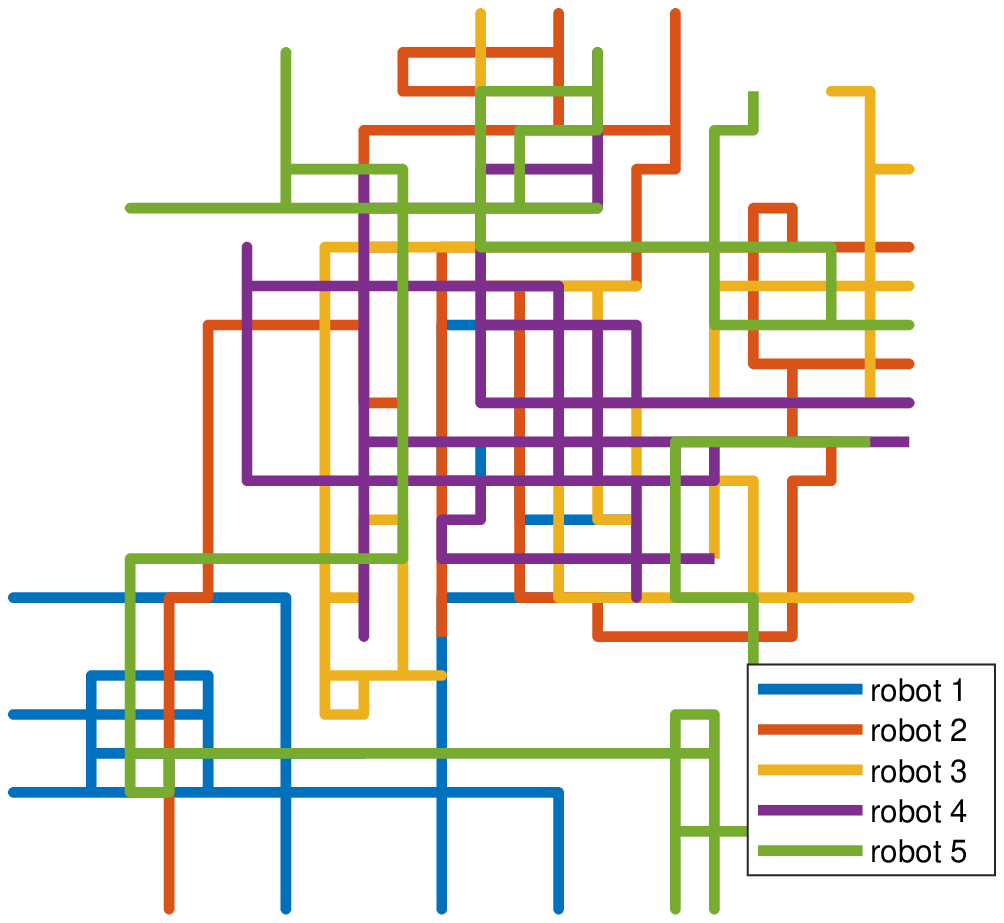}
		\caption{\small Simulation}
		\label{fig:Atlas_base_graph}
	\end{subfigure}
	\hspace*{\fill}%
\end{figure}

We evaluate the proposed algorithms using sequence~00 of the KITTI odometry benchmark 
\cite{Geiger2012CVPR} and a synthetic Manhattan-like dataset \cite{tian18}. 
Each dataset is divided into multiple trajectories to simulate individual robots' paths (Figure~\ref{fig:basekkkkk}).
For the KITTI sequence, we project the trajectories to the 2D plane in order to use the tree-connectivity objective \cite{kasra16wafr}.
Visual odometry
measurements and potential loop closures are obtained from a modified version of
ORB-SLAM2 \cite{murORB2}. 
We estimate the probability of each potential loop closure by normalizing the corresponding DBoW2 score \cite{GalvezTRO12}.
For any specific environment, better mapping from the similarity score to the corresponding probability can be learned offline.
Nonetheless, these estimated probabilities are merely used to encourage the selection of more promising potential matches,
and the exact mapping used is orthogonal to the evaluation of the proposed approximation algorithms. 
For simulation, noisy odometry and loop closures are generated using the 2D
simulator of g2o \cite{kummerle2011g}. 
Each loop closure in the original dataset is considered a potential match
with an occurrence probability generated randomly according to the uniform
distribution $\Ucal(0,1)$.
Then, we decide if each potential match is a ``true'' loop closure by sampling from a Bernoulli distribution with the corresponding occurrence probability.
This process thus provides \emph{unbiased} estimates of the actual occurrence probabilities. 

In our experiments, each visual observation contains about $2000$ keypoints.
We ignore the insignificant variation in observation sizes and design all test cases
under the \TotalUni{} communication cost regime. 
Assuming each keypoint (consisting of a descriptor and coordinates) uses
$40$ bytes of data, a communication budget of $50$, for example, translates to
about $4$MB of data transmission \cite{tian18}. 

\subsection{Certifying Near-Optimality via Convex Relaxation}
\label{sec:certify}
Evaluating the proposed algorithms ideally requires access to the optimal value (OPT) of \ref{eq:codesign}. 
However, computing OPT by brute force is challenging even in relatively small instances. 
Following \cite{tian18}, we compute an upper bound UPT $\geq$ OPT by solving a natural convex relaxation of \ref{eq:codesign},
and use UPT as a surrogate for OPT. 
Comparing with UPT provides an \emph{a posteriori} certificate of near-optimality for solutions returned by the proposed algorithms.
Let $\ppp \triangleq [\pi_1,\dots,\pi_{n}]^\top$ and
$\bell  \triangleq [\ell_1,\dots,\ell_m]^\top$ 
be indicator variables corresponding to vertices and edges of $\Gcal$, respectively.
Let $\AAA$ be the undirected incidence matrix of $\Gcal$.
\ref{eq:codesign} can then be formulated in terms of $\ppp$ and $\boldsymbol{\ell}$.
For example, for modular objectives (Section~\ref{sec:modular}) under the \TotalUni{} communication model, 
\ref{eq:codesign} is equivalent to maximizing $\fe^\star(\boldsymbol{\ell}) \triangleq \sum_{e \in \Eall} p(e) \cdot \ell_e$
subject to $(\ppp, \bell) \in \Fcal_{\text{int}}$, where
$\Fcal_{\text{int}} \triangleq \big\{(\ppp,\bell) \in \{0,1\}^n \times \{0,1\}^m
:\mathbf{1}^{\hspace{-0.05cm}\top}\ppp\leq \kcomm, 
\mathbf{1}^{\hspace{-0.05cm}\top}\bell\leq \kcomp, 
\mathbf{A}^{\hspace{-0.1cm}\top}\ppp \geq \bell\big\}$.
Relaxing $\Fcal_{\text{int}}$ to 
$\Fcal\triangleq \big\{(\ppp,\bell) \in [0,1]^n \times [0,1]^m
:\mathbf{1}^{\hspace{-0.05cm}\top}\ppp\leq \kcomm,
 \mathbf{1}^{\hspace{-0.05cm}\top}\bell\leq \kcomp, 
\mathbf{A}^{\hspace{-0.1cm}\top}\ppp \geq \bell\big\}$
gives the natural LP relaxation, whose optimal value is an upper bound on OPT.
Note that in this special case, we can also compute OPT directly by solving the original ILP (this is not practical for real-world applications). 
On the other hand, for maximizing the D-criterion and tree-connectivity (Section~\ref{sec:submodular}),
convex relaxation produces determinant maximization (maxdet) problems \cite{vandenberghe1998determinant} subject to affine constraints.
In our experiments, all LP and ILP instances are solved using built-in solvers in MATLAB.
All maxdet problems are modeled using the YALMIP toolbox \cite{Lofberg2004} and solved using SDPT3 \cite{Toh99sdpt3} in MATLAB.

\subsection{Results with Modular Objectives}

\begin{figure}[t]
	\centering
	\includegraphics[width=0.8\textwidth]{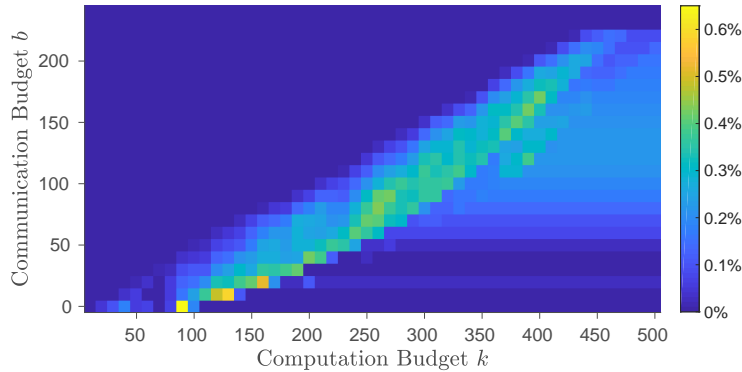}
	\caption{\small Optimality gap of \textsc{\small M-Greedy} in KITTI~00 under \TotalUni{}.
	In this case, the objective is to maximize the expected number of loop
	closures (\cite[Eq.~$4$]{tian18}).
	For each problem instance specified by a pair of budgets $(\kcomm, \kcomp)$, we calculate the difference between the achieved value and 
	the optimal value obtained by solving the ILP. 
	Each value is then normalized by the maximum achievable value given infinite budgets and shown in percentage.}
	\label{fig:KITTI_maxprob_grd_vs_opt}
\end{figure}

For brevity, we refer to the proposed greedy algorithm in Section~\ref{sec:modular} under \TotalUni{} (see Algorithm~\ref{alg:mgreedy}) as \textsc{\small M-Greedy}.
Figure~\ref{fig:KITTI_maxprob_grd_vs_opt} shows the optimality gap of \textsc{\small M-Greedy} evaluated on the KITTI~00 dataset.
In this case, the objective is to maximize the expected number of true loop closures (\cite[Eq.~$4$]{tian18}).
Given the exchange graph,
we vary the communication budget $\kcomm$ and computation budget $\kcomp$ to 
produce different instances of \ref{eq:codesign}. 
For each instance, we compute the difference between the achieved value and the optimal value 
obtained by solving the corresponding ILP (Section~\ref{sec:certify}).
The computed difference is then normalized by the maximum achievable value given infinite budgets and converted into percentage.
The result for each instance is shown as an individual cell in Figure~\ref{fig:KITTI_maxprob_grd_vs_opt}. 
In all instances, the optimality gaps are close to zero.
In fact, the maximum \emph{unnormalized} difference across all instances
is about $1.35$, i.e., the achieved value and the optimal value only differ by $1.35$ expected loop closures.
These results clearly confirm the near-optimal performance of the proposed algorithm.
Furthermore, in many instances, \textsc{\small M-Greedy} finds optimal solutions (shown in dark blue).
We note that this result is expected when $\kcomm > \kcomp$ (top-left region),
as in this case \ref{eq:codesign} reduces to a degenerate instance of the knapsack problem,
for which greedy algorithm is known to be optimal.

\subsection{Results with Submodular Objectives}

\begin{figure}[t]
	\centering
	\hfill
	\begin{subfigure}[t]{0.30\textwidth}
		\centering
		\includegraphics[width=\textwidth]{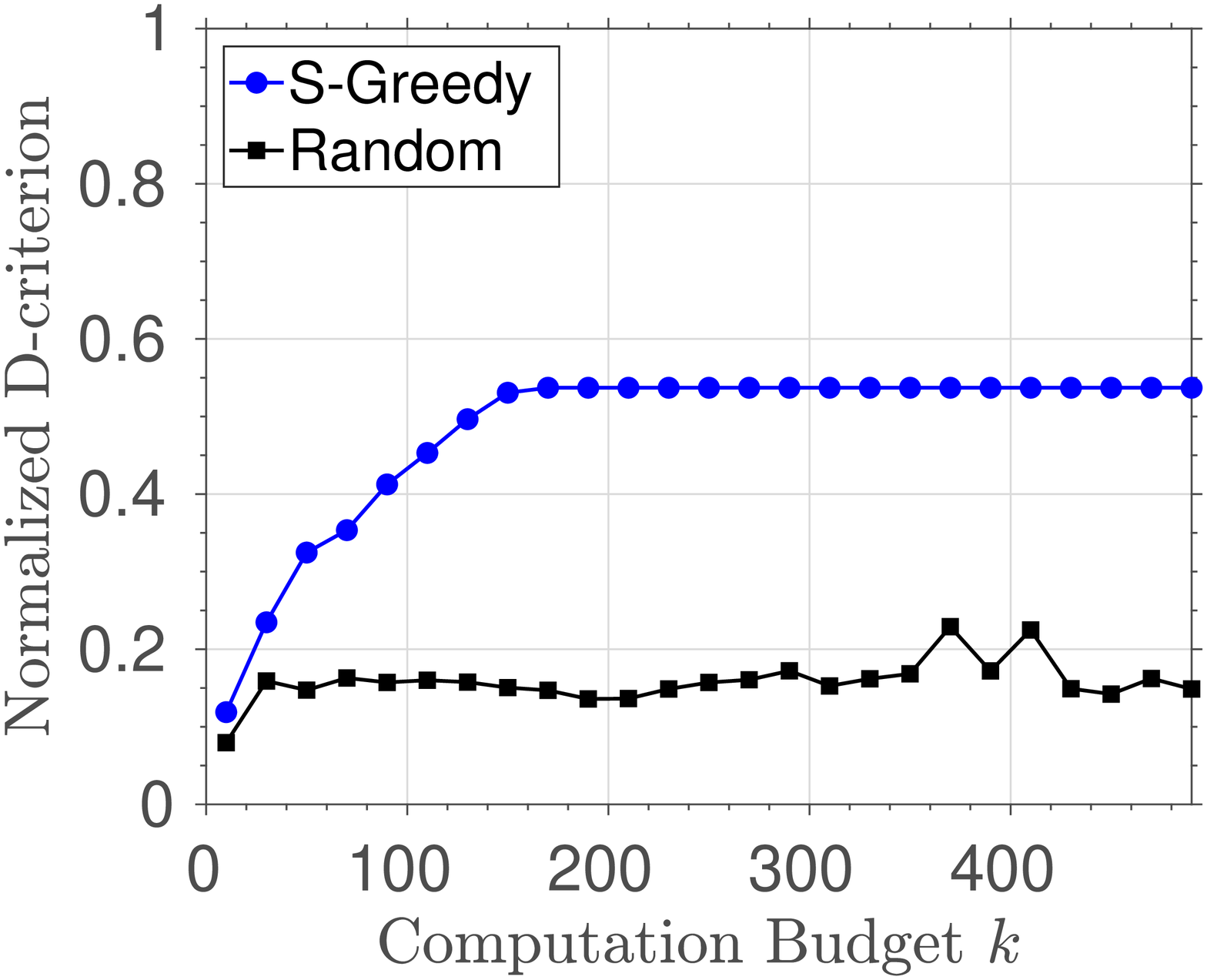}
		\caption{\small $b = 20$}
		\label{fig:kitti_fim_b_20}
	\end{subfigure}
	\hfill
	\begin{subfigure}[t]{0.30\textwidth}
		\centering
		\includegraphics[width=\textwidth]{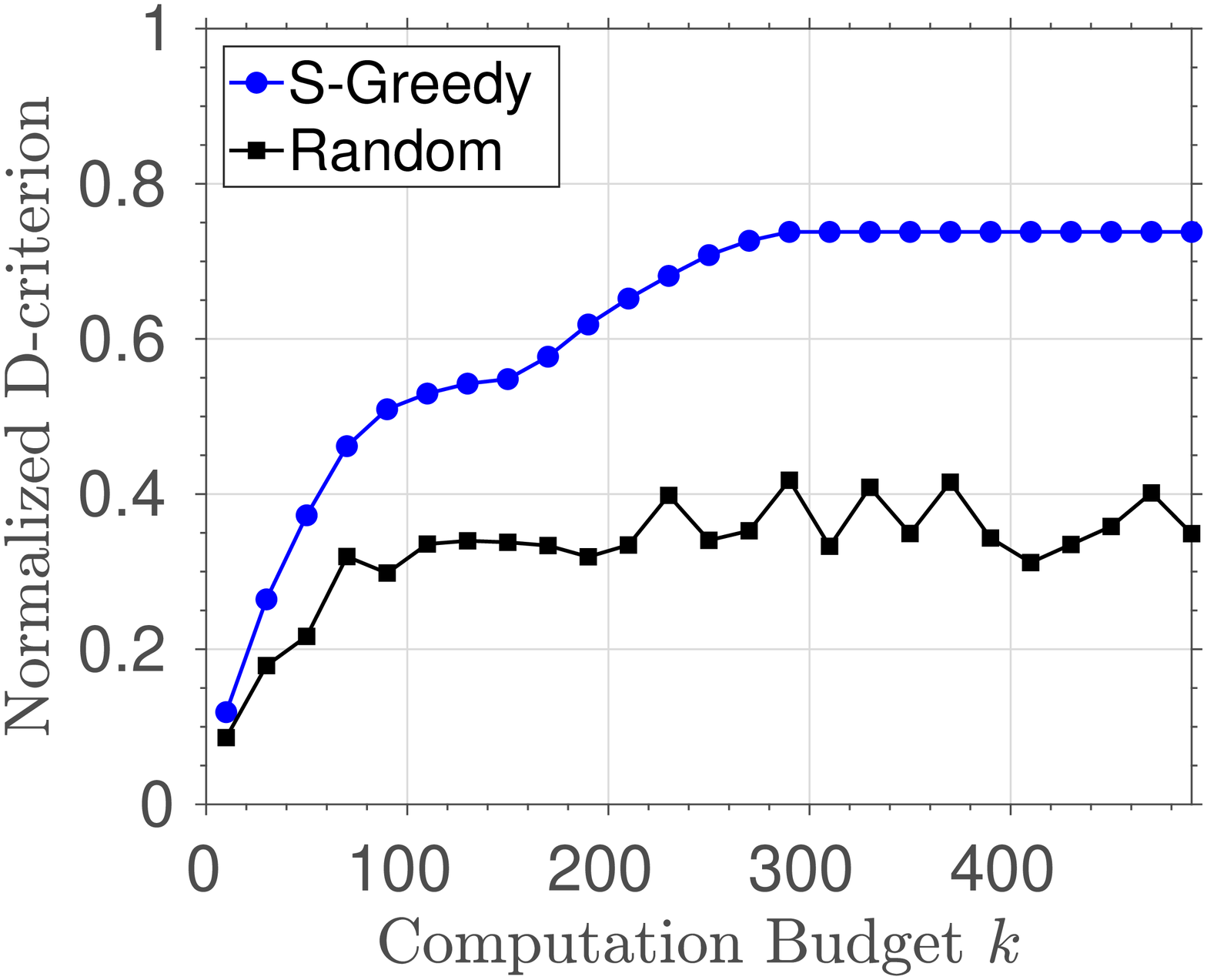}
		\caption{\small $b = 70$}
		\label{fig:kitti_fim_b_70}
	\end{subfigure}
	\hfill
	\begin{subfigure}[t]{0.30\textwidth}
		\centering
		\includegraphics[width=\textwidth]{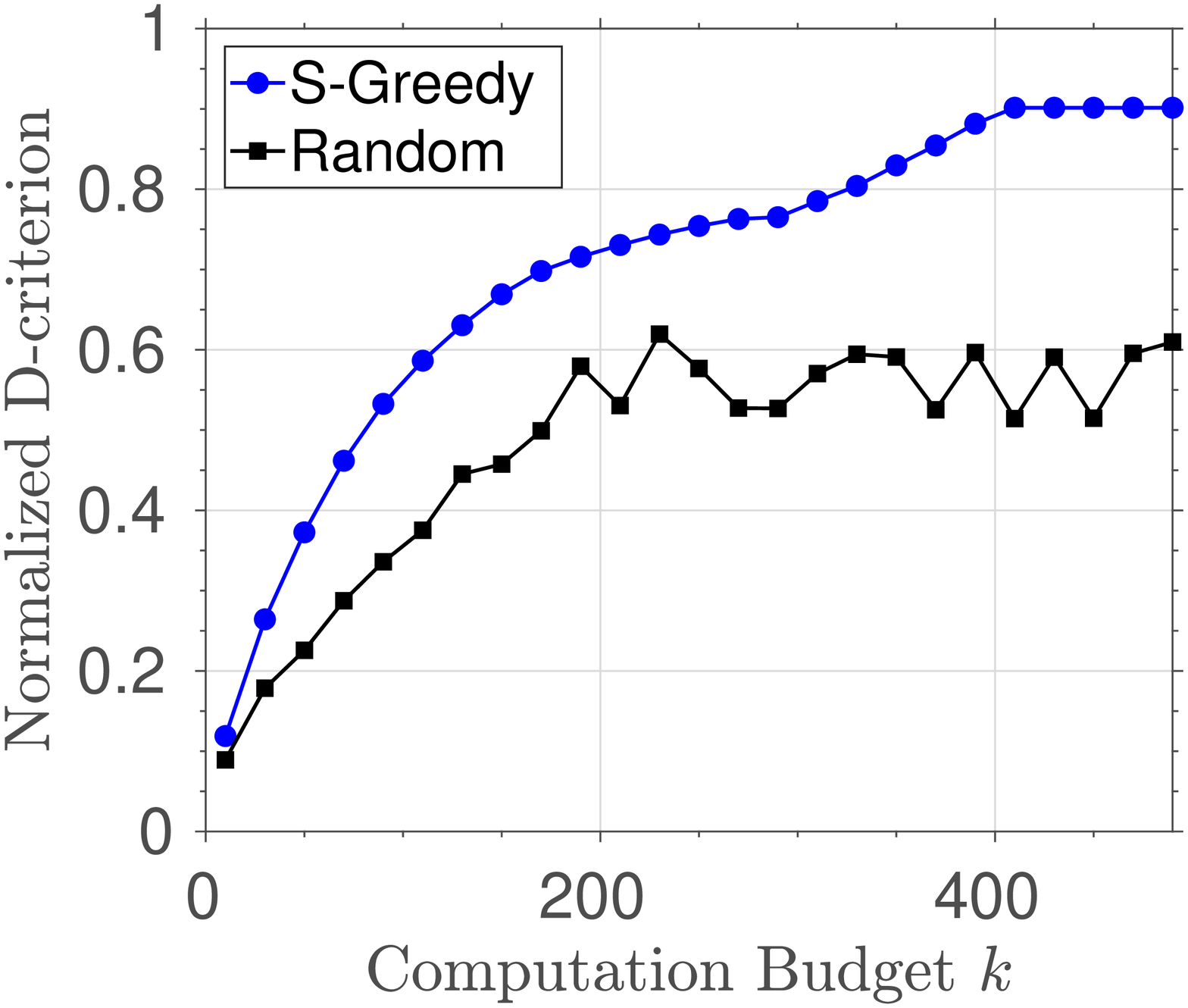}
		\caption{\small $b = 150$}
		\label{fig:kitti_fim_b_150}
	\end{subfigure}
	\\
	\centering
	\hfill
	\begin{subfigure}[t]{0.30\textwidth}
		\centering
		\includegraphics[width=\textwidth]{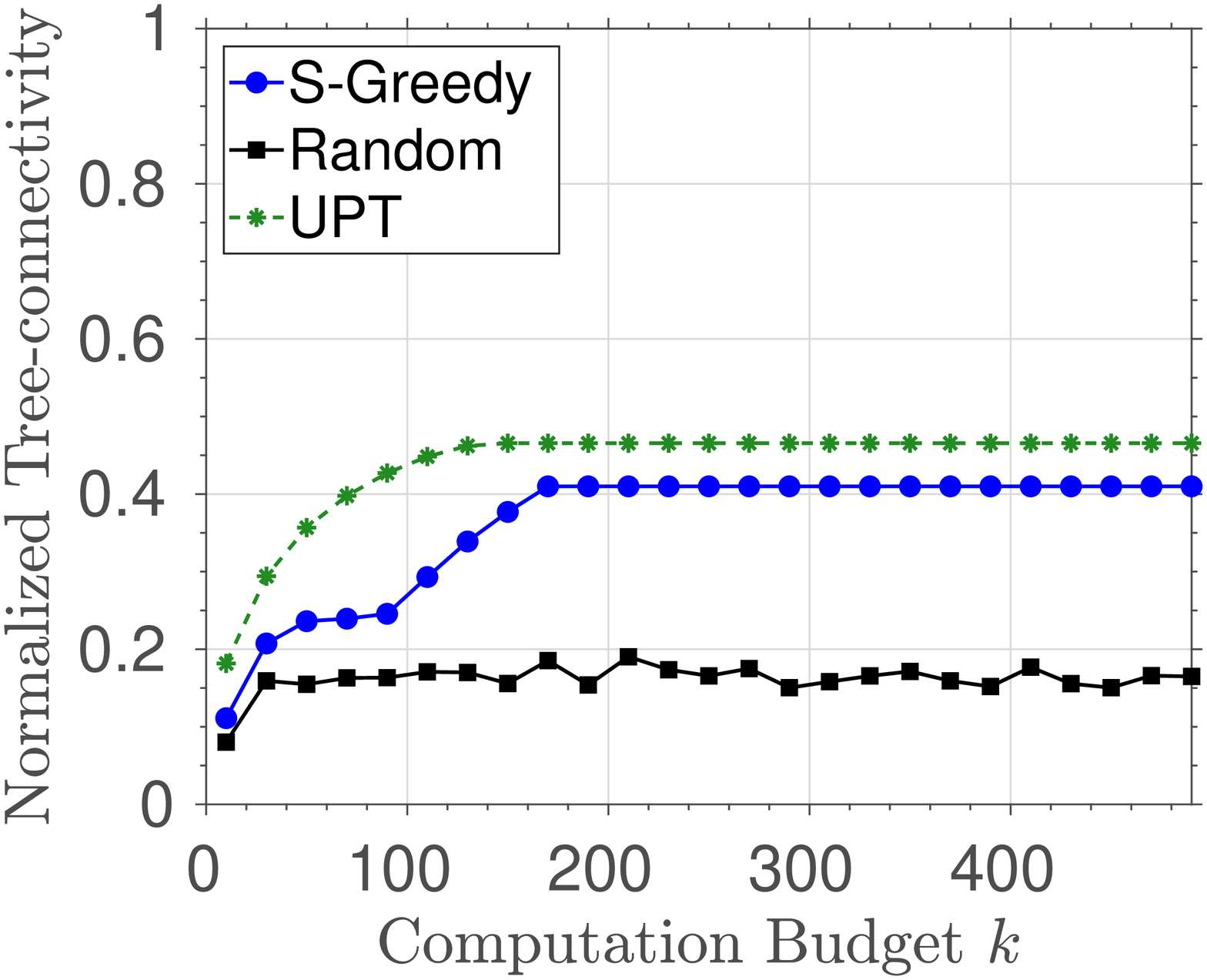}
		\caption{\small $b = 20$}
		\label{fig:kitti_wst_b_20}
	\end{subfigure}
	\hfill
	\begin{subfigure}[t]{0.30\textwidth}
		\centering
		\includegraphics[width=\textwidth]{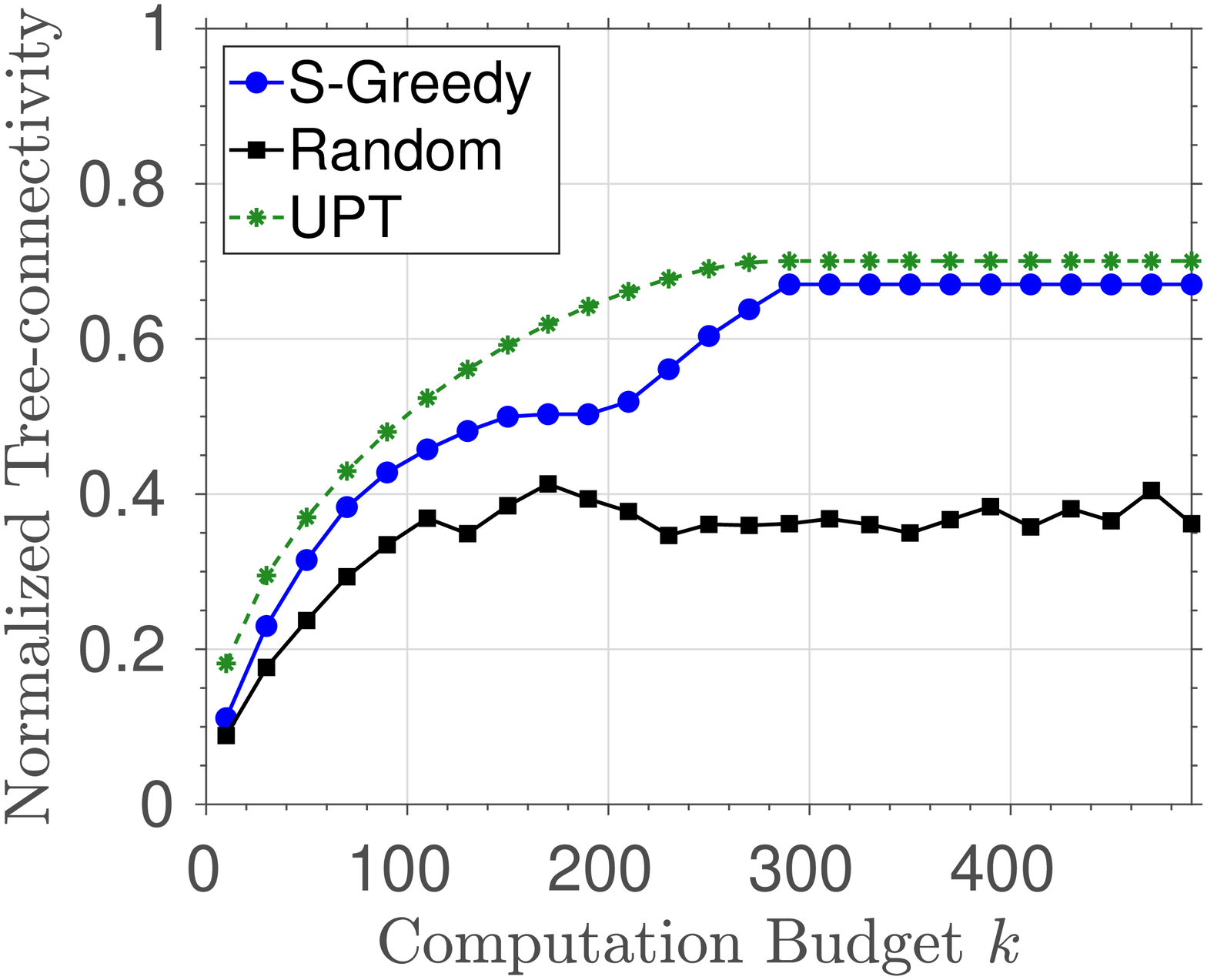}
		\caption{\small $b = 70$}
		\label{fig:kitti_wst_b_70}
	\end{subfigure}
	\hfill 
	\begin{subfigure}[t]{0.30\textwidth}
		\centering
		\includegraphics[width=\textwidth]{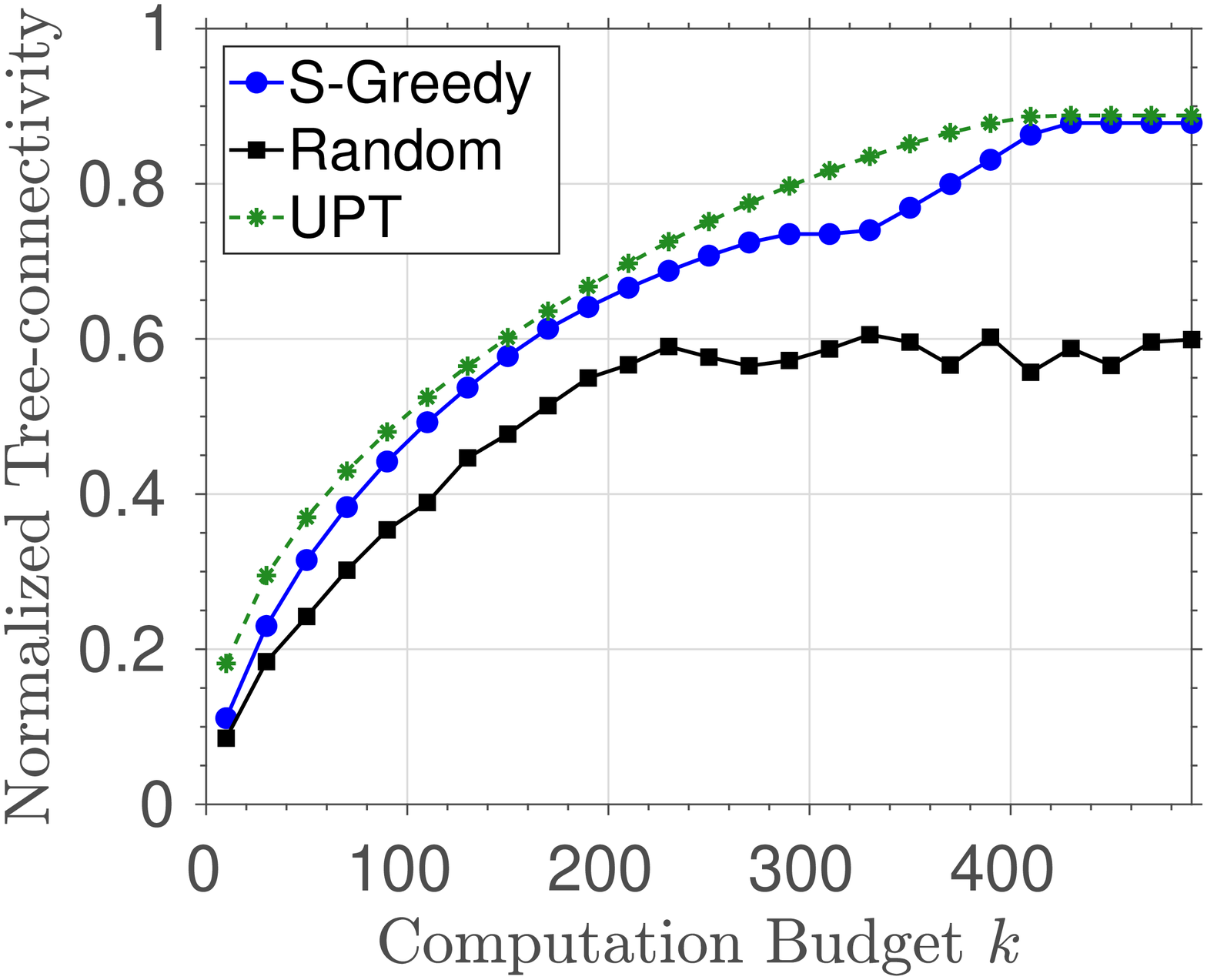}
		\caption{\small $b = 150$}
		\label{fig:kitti_wst_b_150}
	\end{subfigure}
	\caption{\small Performance of \textsc{\small Submodular-Greedy} in KITTI~00
	under \TotalUni{},
	with the D-criterion objective in (a)-(c) and tree-connectivity objective in (d)-(f).
	Each figure shows one scenario with a fixed communication budget $\kcomm$ and varying computation budget $\kcomp$. 
	The proposed algorithm is compared against a random baseline.
	With the tree-connectivity objective, we also show the upper bound (UPT) computed using convex relaxation.
	All values are normalized by the maximum achievable value given infinite budgets.
	}
	\label{fig:kitti_fix_b}
\end{figure}

Figure~\ref{fig:kitti_fix_b} shows performance of \textsc{\small S-Greedy} in KITTI~00 
under the \TotalUni{} regime. 
Figures~\ref{fig:kitti_fim_b_20}-\ref{fig:kitti_fim_b_150} use the D-criterion objective \cite[Eq.~$2$]{tian18},
and \ref{fig:kitti_wst_b_20}-\ref{fig:kitti_wst_b_150} use the tree-connectivity objective \cite[Eq.~$3$]{tian18}.
Each figure corresponds to one scenario with a fixed communication budget $\kcomm$ and varying computation budget $\kcomp$.
We compare the proposed algorithm with a baseline that randomly selects 
$\kcomm$ vertices, and then selects $\kcomp$ edges at random from the set of edges incident to the selected vertices.
When using the tree-connectivity objective,
we also plot the upper bound (UPT) computed using convex relaxation (Section~\ref{sec:certify}).\footnote{For KITTI~00 we do not show UPT when using the D-criterion objective, because solving the convex relaxation in this case is too time-consuming.}
All values shown are normalized by the maximum achievable value given infinite budgets.
In all instances, \textsc{\small S-Greedy} clearly outperforms the random baseline. 
Moreover, in \ref{fig:kitti_wst_b_20}-\ref{fig:kitti_wst_b_150}, the achieved objective values are close to UPT.
In particular, using the fact that UPT $\geq$ OPT, we compute a lower bound on 
the empirical approximation ratio in \ref{fig:kitti_wst_b_20}-\ref{fig:kitti_wst_b_150},
which varies between $0.58$ and $0.99$.
This confirms the near-optimality of the approximate solutions found by \textsc{\small S-Greedy}.
The inflection point of each \textsc{\small S-Greedy} curve in Figure~\ref{fig:kitti_fim_b_20}-\ref{fig:kitti_wst_b_150}
corresponds to the point where the algorithm switches from greedily selecting edges (\textsc{\small E-Greedy}) to greedily selecting vertices (\textsc{\small V-Greedy}).
Note that in all scenarios, the achieved value of \textsc{\small S-Greedy} eventually saturates.
This is because
as $\kcomp$ increases, \textsc{\small S-Greedy}
eventually spends the entire communication budget and halts.
In these cases, 
since we already select the maximum number (i.e., $\kcomm$) of vertices,
the algorithm achieves a constant approximation factor of $1 - 1/e$ (see Lemma~\ref{lem:vgrd_apx} in \ref{app:proofs}).
Due to space limitation, additional results on the simulated datasets 
and results comparing \textsc{\small M-Greedy} with \textsc{\small S-Greedy} in the case of modular objectives are reported in \ref{app:additional_results}. 

\section{Conclusion}

Inter-robot loop closure detection is a
critical component for many multirobot applications, including those that require
collaborative localization and mapping. The resource-intensive nature of
this process, together with the limited nature of mission-critical resources available
on-board necessitate intelligent utilization of available/allocated resources.
This paper studied distributed inter-robot loop closure detection under
computation and communication budgets. 
More specifically, we sought to maximize monotone submodular performance metrics by
selecting a subset of potential inter-robot loop closures that is feasible with
respect to both computation and communication budgets. 
This problem
generalizes previously studied NP-hard problems that assume either unbounded
computation or communication. 

In particular, for monotone modular objectives (e.g.,
expected number of true loop closures)
we presented a family of greedy algorithms with constant-factor approximation
ratios under multiple communication cost regimes.
This was made possible through establishing an approximation-factor preserving
reduction to well-studied instances of monotone submodular maximization problems
under cardinality, knapsack, and partition matroid constraints.
More generally, for any monotone submodular objective,
we presented an approximation algorithm that exploits the
complementary nature of greedily selecting potential loop closures for
verification and
greedily broadcasting observations, in order to establish a performance guarantee. 
The performance guarantee in this more general case depends on resource budgets and the extent of perceptual ambiguity.

It remains an open problem whether constant-factor approximation 
for any monotone submodular objective is possible. We plan to study this open problem as
part of our future work.
Additionally, although the burden of verifying potential loop closures is
distributed among the robots, our current framework still relies on a
centralized scheme for evaluating the performance metric and running the
approximation algorithms presented for \ref{eq:codesign}.  In the future, we aim
to eliminate such reliance on centralized computation.

\subsubsection*{Acknowledgments} 
This work was supported in part by the NASA Convergent
Aeronautics Solutions project Design Environment for Novel
Vertical Lift Vehicles (DELIVER), by ONR under BRC award N000141712072, and by
ARL DCIST under Cooperative Agreement Number W911NF-17-2-0181.

{
  \footnotesize
\bibliographystyle{plainnat}
\bibliography{scanexchange,slam}
}

\clearpage
\begin{appendices}
	\renewcommand{\thesection}{\appendixname~\Alph{section}}
	
	\section{Algorithms}
	\label{app:algorithms}
	
	\begin{algorithm}[H]
		\caption{\textsc{\small Modular-Greedy} \small (\TotalUni{})}\label{alg:mgreedy}
		\begin{algorithmic}[1]
			\renewcommand{\algorithmicrequire}{\textbf{Input:}}
			\renewcommand{\algorithmicensure}{\textbf{Output:}}
			\Require
			\Statex - Exchange graph $\Gcal = (\Vall, \Eall)$
			\Statex - Communication budget $\kcomm$ and computation budget $\kcomp$ 
			\Statex - Modular $\fe: 2^{\Eall} \to \Rset_{\geq 0}$ and $\fv$ as defined in Section~\ref{sec:modular}
			\Ensure
			\Statex - A budget-feasible pair 
			$\Vgrd \subseteq \Vall, \Egrd \subseteq \Eall$. 
			\State {$\Vgrd \leftarrow \varnothing$}
			\For{$i=1:\kcomm$} \Comment \textcolor{green!50!black}{greedy loop}
		     \State {$v^\star \leftarrow \argmax_{v \in \Vall\setminus\Vgrd}
				      				\fv(\Vgrd \cup \{v\})$}
		     \State {$\Vgrd \leftarrow \Vgrd \cup \{v^\star\}$}
		    \EndFor
		    \State $\Egrd \leftarrow \edg(\Vgrd)$
			\State \Return $\Vgrd, \Egrd$
		\end{algorithmic}
	\end{algorithm}
	
	\begin{algorithm}[H]
		\caption{\textsc{\small Edge-Greedy} \small (\TotalUni{})}\label{alg:egreedy}
		\begin{algorithmic}[1]
			\renewcommand{\algorithmicrequire}{\textbf{Input:}}
			\renewcommand{\algorithmicensure}{\textbf{Output:}}
			\Require
			\Statex - Exchange graph $\Gcal = (\Vall, \Eall)$
			\Statex - Communication budget $\kcomm$ and computation budget $\kcomp$ 
			\Statex - $\fe: 2^{\Eall} \to \Rset_{\geq 0}$
			\Ensure
			\Statex - A budget-feasible pair 
			$\Vgrd \subseteq \Vall, \Egrd \subseteq \Eall$. 
			\State {$\Egrd \leftarrow \varnothing$}
			\For{$i=1:\min(\kcomm, \kcomp)$} \Comment \textcolor{green!50!black}{greedy loop} \label{alg:egrd_1_start}
		      \State {$e^\star \leftarrow \argmax_{e \in \Eall\setminus\Egrd}
		      				\fe(\Egrd \cup \{e\})$}
		      \State {$\Egrd \leftarrow \Egrd \cup \{e^\star\}$}
		    \EndFor 
		    \State $\Vgrd \leftarrow \textsc{\small VertexCover}(\Egrd)$ \Comment \textcolor{green!50!black}{any vertex cover of $\Egrd$} \label{alg:egrd_1_end}
		    
			\If {$\kcomp > \kcomm$} \Comment \textcolor{green!50!black}{local optimization} \label{alg:egrd_2_start}
			\State {$\Ecal_\text{free} \leftarrow 
					\edg(\Vgrd) \setminus \Egrd $} \Comment \textcolor{green!50!black}{identify comm-free edges}
			\For {$i=1:\min(|\Ecal_\text{free}|, \kcomp - \kcomm)$}
				\State {$e^\star \leftarrow \argmax_{e \in \Ecal_\text{free} \setminus \Egrd}
					      				\fe(\Egrd \cup \{e\})$}
				\State {$\Egrd \leftarrow \Egrd \cup \{e^\star\}$}
			\EndFor 
			\EndIf \label{alg:egrd_2_end}
			\State \Return $\Vgrd, \Egrd$
		\end{algorithmic}
	\end{algorithm}
	
	\begin{algorithm}[H]
		\caption{\textsc{\small Vertex-Greedy} \small (\TotalUni{})}\label{alg:vgreedy}
		\begin{algorithmic}[1]
			\renewcommand{\algorithmicrequire}{\textbf{Input:}}
			\renewcommand{\algorithmicensure}{\textbf{Output:}}
			\Require
			\Statex - Exchange graph $\Gcal = (\Vall, \Eall)$
			\Statex - Communication budget $\kcomm$ and computation budget $\kcomp$ 
			\Statex - $\fe: 2^{\Eall} \to \Rset_{\geq 0}$ and $\fv : \VV \mapsto
				    \fe\big(\edg(\VV)\big)$
			\Ensure
			\Statex - A budget-feasible pair 
			$\Vgrd \subseteq \Vall, \Egrd \subseteq \Eall$. 
			\State {$\Vgrd \leftarrow \varnothing$}
			\While {\textsc{\small True}} \Comment \textcolor{green!50!black}{greedy loop}
				\State {$v^\star \leftarrow \argmax_{v \in
						  \Vall\setminus\Vgrd} \fv(\Vgrd \cup \{v\})$} \Comment \textcolor{green!50!black}{find next best vertex}
				\If {$|\Vgrd \cup \{v^\star\}| > \kcomm$ \textbf{or} $|\edg(\Vgrd \cup \{v^\star\})| > \kcomp$ }
					\State \textbf{break} \Comment \textcolor{green!50!black}{stop if violating budget} \label{alg:vgrd_break}
				\EndIf
				\State {$\Vgrd \leftarrow \Vgrd \cup \{v^\star\}$}
			\EndWhile
			\State $\Egrd \leftarrow \edg(\Vgrd)$
			\State \Return $\Vgrd, \Egrd$
		\end{algorithmic}
	\end{algorithm}
	
	
	\section{Proofs}
	\label{app:proofs}
	\begin{proof}[Theorem~\ref{th:fvNMS}]
	  For convenience, we present the proof after modifying the definition of
	  $\fv$ according to
	  \begin{align}
		\fv(\Vcal) = \underset{\EE
		  \subseteq \edg(\Vcal) \cup \mathcal{N}}{\max}
		  \hspace{.1cm} \sum_{e \in \Ecal} p(e) \quad \text{s.t.} \quad |\Ecal|
		  = k
		\end{align}
		where now $\Ncal$ is a set of $k$ \emph{null} items with $p(e) = 0$ for
		all $e \in \Ncal$. It is easy to see this modification does not change
		the function $\fv$.
		\begin{itemize}
			\item[$\diamond$] Normalized: $\fv(\varnothing) = 0$ by definition.
			\item[$\diamond$] Monotone: for any $\Scal \subseteq \Qcal \subseteq \Vall$,
			      $\edg(\Scal) \subseteq \edg(\Qcal)$ and thus $\fv(\Scal) \leq \fv(\Qcal)$.
			\item[$\diamond$] Submodular: we need to show that for any $\Scal
				      \subseteq \Qcal \subseteq \Vall$ and all $v \in \Vall/\Qcal$:
			      \begin{equation}
				      \fv(\Scal \cup \{v\}) - \fv(\Scal) \geq \fv(\Qcal \cup \{v\}) - \fv(\Qcal)
				      \label{eq:sub_def}
			      \end{equation}
				  If $\fv(\Qcal \cup \{v\}) = \fv(\Qcal)$, \eqref{eq:sub_def}
				  follows from the monotonicity of $g$. We thus focus on cases
				  where $\fv(\Qcal \cup \{v\}) > \fv(\Qcal)$. Recall that for
				  any $\Vcal \subseteq \Vall$,
				  $\fv(\Vcal)$ is the sum of top $k$ edge probabilities in
				  $\edg(\Vcal) \cup \Ncal$. Let $\edg(\Vcal;k)$ represent such a set. 
				  We have,
				  \begin{align}
					\fv(\Qcal \cup \{v\}) - \fv(\Qcal) & = 
					  \sum_{e \in
					\EE_\boxplus} p(e) - \sum_{e \in \EE_\boxminus}
					  p(e)
					  \label{eq:Q}
				  \end{align}
				  where $\EE_\boxplus \triangleq \edg(\Qcal \cup \{v\};k) \setminus
				  \edg(\Qcal;k)$ and $\EE_\boxminus \triangleq \edg(\Qcal;k) \setminus
				  \edg(\Qcal \cup \{v\};k)$. We know that 
				  $|\EE_\boxplus| = |\EE_\boxminus|$ since $|\edg(\Qcal \cup
				  \{v\};k)| = |\edg(\Qcal;k)| = k$. Now we claim that,
				  \begin{align}
					\fv(\Scal \cup \{v\}) \geq \fv(\Scal) + 
					   \sum_{e \in
					\EE_\boxplus} p(e) - \sum_{e \in \EE_\boxminus^\ast}
					  p(e)
					  \label{eq:ineq}
				  \end{align}
				  in which $\EE_\boxminus^\ast$ is the set of $|\EE_\boxplus|$
				  edges in $\edg(\Scal;k)$ with lowest probabilities. To see this, note that 
				  $\edg(\Scal;k) \cup \EE_\boxplus \setminus
				  \EE^\ast_\boxminus$ is a $k$-subset of $\edg(\Scal \cup \{v\})
				  \cup \Ncal$. Therefore,
				  \begin{align}
					\fv(\Scal \cup \{v\}) & \triangleq \underset{\EE
		  \subseteq \edg(\Vcal) \cup \mathcal{N}}{\max}
		  \hspace{.1cm} \sum_{e \in \Ecal} p(e) \quad \text{s.t.} \quad |\Ecal|
		  = k\\ 
		  &\geq
		  \sum_{e \in \edg(\Scal;k) \cup \EE_\boxplus \setminus
		  \EE^\ast_\boxminus} p(e) \\
		  & = \fv(\Scal) + 
					   \sum_{e \in
					\EE_\boxplus} p(e) - \sum_{e \in \EE_\boxminus^\ast}
					  p(e)
					\label{}
				  \end{align}
%
%
					Now let us
					sort edges according to their probabilities in
					$\edg(\Scal;k)$ and in $\edg(\Qcal;k)$. Since
					$\Scal \subseteq \Qcal$ and, consequently, $\edg(\Scal)
					\subseteq \edg(\Qcal)$, the probability of the
					$i$th most-probable 
					edge in 
					$\edg(\Qcal;k)$ is at least that of the $i$th most-probable
					edge in $\edg(\Scal;k)$. This shows that 
					$\sum_{e \in \EE_\boxminus}
					p(e) \geq \sum_{e \in \EE_\boxminus^\ast} p(e)$, which
					together with \eqref{eq:Q} and \eqref{eq:ineq}  
					conclude the proof.

		\end{itemize}
	\end{proof}
	
	\begin{lemma}
	\normalfont
	For any $\Vcal \subseteq \Vall$ that is \ref{prob:codesign_nested}-feasible, define
	\begin{equation}
	\edg(\VV; \kcomp) \in 
	\underset{\Ecal \subseteq
	\edg(\Vcal)
	}{\argmax} 
	\fe(\Ecal) \text{ s.t. } |\Ecal| \leq \kcomp.
	\label{eq:edgek}
	\end{equation}
	Then, $\edg(\VV;\kcomp)$ is \ref{eq:codesign}-feasible.
	\label{lem:ecal_feasible}
	\end{lemma}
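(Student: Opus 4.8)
The plan is to verify, directly from the definitions, that $\Ecal \triangleq \edg(\VV;\kcomp)$ satisfies both constraints of $\ref{eq:codesign}$. The computation constraint $|\Ecal| \leq \kcomp$ is immediate: $\edg(\VV;\kcomp)$ is, by its defining equation~\eqref{eq:edgek}, a maximizer of $\fe$ over the collection $\{\Ecal \subseteq \edg(\Vcal) : |\Ecal| \leq \kcomp\}$, and this collection is nonempty (it contains $\varnothing$), so the $\argmax$ is well defined and its output has cardinality at most $\kcomp$.

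For the communication constraint, the key observation I would make is that $\Vcal$ \emph{itself} serves as a witnessing vertex cover. Indeed, by construction $\Ecal \subseteq \edg(\Vcal)$, so every edge of $\Ecal$ is incident to at least one vertex of $\Vcal$; hence $\Vcal \in \Cover(\Ecal)$. Since $\Vcal$ is assumed to be $\ref{prob:codesign_nested}$-feasible, it satisfies the communication budget \textbf{CB} (in any of the regimes of Table~\ref{tab:gcomm}). Thus $\Vcal$ is a vertex cover of $\Ecal$ satisfying \textbf{CB}, which is exactly the existential requirement ``$\exists\,\Vcal' \in \Cover(\Ecal)$ satisfying \textbf{CB}'' appearing in $\ref{eq:codesign}$. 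Combining the two observations, $\Ecal = \edg(\VV;\kcomp)$ is $\ref{eq:codesign}$-feasible.

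There is essentially no obstacle here; the only point worth stressing is that no re-optimization of the vertex cover is needed after the edges are selected — the vertices already chosen in the outer problem of $\ref{prob:codesign_nested}$ automatically cover any subset of $\edg(\Vcal)$, so budget-feasibility transfers for free. This is precisely the structural fact that makes the approximation-preserving reduction of Theorem~\ref{thm:modular_apx} go through, and the present lemma isolates it as the feasibility half of that reduction.
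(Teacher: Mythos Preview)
Your proof is correct and follows exactly the same approach as the paper: verify the computation budget directly from the definition of $\edg(\VV;\kcomp)$, and then use $\VV$ itself as the witnessing vertex cover (since $\edg(\VV;\kcomp)\subseteq\edg(\VV)$ and $\VV$ satisfies \textbf{CB}). The paper's proof is simply a two-sentence version of what you wrote.
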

	\begin{proof}[Lemma~\ref{lem:ecal_feasible}]
	By construction, $|\edg(\VV;\kcomp)| \leq \kcomp$. Also, there exists a cover of $\edg(\VV;\kcomp)$ (namely, $\VV$) that satisfies \textbf{CB}.
	This concludes the proof. 
	\end{proof}

	\begin{lemma}
	\normalfont
	Let $\OPTe$ and $\OPTv$ denote the optimal values of \ref{eq:codesign} and \ref{prob:codesign_nested}, respectively. Then
	$\OPTe = \OPTv$.
	\label{lem:opt_eq}
	\end{lemma}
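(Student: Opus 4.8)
The plan is to establish the equality $\OPTe = \OPTv$ by proving the two inequalities $\OPTe \le \OPTv$ and $\OPTe \ge \OPTv$ separately, exploiting the duality between an edge set and a vertex cover that witnesses its communication-feasibility. Throughout, recall that $\fv(\Vcal)$ denotes the optimal value of the inner problem in \ref{prob:codesign_nested}, i.e.\ $\fv(\Vcal) = \max\{\fe(\Ecal) : \Ecal \subseteq \edg(\Vcal),\ |\Ecal| \le \kcomp\}$, and that $\edg(\Vcal;\kcomp)$ from Lemma~\ref{lem:ecal_feasible} attains this maximum.

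First I would show $\OPTe \le \OPTv$. Let $\EE^\star$ be an optimal solution of \ref{eq:codesign}, so $\fe(\EE^\star) = \OPTe$, $|\EE^\star| \le \kcomp$, and there is a cover $\Vcal^\star \in \Cover(\EE^\star)$ satisfying \textbf{CB}. Since every edge of $\EE^\star$ is incident to some vertex of $\Vcal^\star$, we have $\EE^\star \subseteq \edg(\Vcal^\star)$; together with $|\EE^\star| \le \kcomp$ this shows $\EE^\star$ is feasible for the inner maximization at $\Vcal^\star$, hence $\fv(\Vcal^\star) \ge \fe(\EE^\star) = \OPTe$. Because $\Vcal^\star$ satisfies \textbf{CB}, it is feasible for the outer problem, so $\OPTv \ge \fv(\Vcal^\star) \ge \OPTe$.

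Next I would show $\OPTv \le \OPTe$. Let $\Vcal^\star$ be an optimal solution of \ref{prob:codesign_nested}; it satisfies \textbf{CB} and $\fv(\Vcal^\star) = \OPTv$. Take $\EE^\star \triangleq \edg(\Vcal^\star;\kcomp)$, the inner optimizer, so $\fe(\EE^\star) = \fv(\Vcal^\star) = \OPTv$ and $|\EE^\star| \le \kcomp$. By Lemma~\ref{lem:ecal_feasible}, $\EE^\star$ is \ref{eq:codesign}-feasible (indeed $\Vcal^\star$ itself is a cover of $\EE^\star$ satisfying \textbf{CB}). Therefore $\OPTe \ge \fe(\EE^\star) = \OPTv$. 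Combining the two inequalities yields $\OPTe = \OPTv$.

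I do not expect a substantive obstacle here: the argument is essentially bookkeeping about the relationship between ``$\Vcal$ covers $\EE$'' and ``$\EE \subseteq \edg(\Vcal)$'', together with the observation that the inner maximum is attained (the sets are finite). The only point deserving a line of care is that passing from a feasible edge set to the outer variable uses \emph{any} witnessing cover (not necessarily a minimum one), and passing back uses Lemma~\ref{lem:ecal_feasible}; degenerate cases such as $\EE = \varnothing$ or $\Vcal = \varnothing$ are handled by the normalization $\fe(\varnothing)=\fv(\varnothing)=0$ and cause no difficulty.
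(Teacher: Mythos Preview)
Your proposal is correct and matches the paper's own proof essentially line for line: both directions are proved by passing between an optimal edge set and a witnessing cover, invoking Lemma~\ref{lem:ecal_feasible} for one direction and the definition of $\fv$ for the other. The only cosmetic difference is the order in which you present the two inequalities.
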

	\begin{proof}[Lemma~\ref{lem:opt_eq}]
	We prove $\OPTe \geq \OPTv$ and $\OPTv \geq \OPTe$.
	\begin{itemize}
	\item[$\diamond$] $\OPTe \geq \OPTv$: Suppose $\Vcal^\star$ is an optimal solution to
			\ref{prob:codesign_nested}. Let $\Ecal^\star \triangleq \edg(\VV^\star;\kcomp)$.
			By Lemma~\ref{lem:ecal_feasible}, $\Ecal^\star$ is
			\ref{eq:codesign}-feasible. In addition, $\fe(\Ecal^\star) = \OPTv$
			by the definition of $\fv$. Therefore, $\OPTe \geq \fe(\Ecal^\star) = \OPTv$.
	\item[$\diamond$] $\OPTv \geq \OPTe$: Suppose $\Ecal^\star$ is an optimal solution to
		\ref{eq:codesign}. Then there exists $\Vcal^\star \in \Cover(\Ecal^\star)$ that satisfies \textbf{CB}.
		Thus $\Vcal^\star$ is 
		\ref{prob:codesign_nested}-feasible. Also,
		$\fv(\Vcal^\star) \geq \OPTe$ by the definition of $\fv$.
		Therefore, $\OPTv \geq \fv(\Vcal^\star) \geq \OPTe$.
	\end{itemize}
	\end{proof}	
	
	\begin{proof}[Theorem~\ref{thm:modular_apx}]
	Let $\tilde{\Ecal}$ be the edge set returned by the procedure. 
	By Lemma~\ref{lem:ecal_feasible}, $\tilde{\Ecal}$ is \ref{eq:codesign}-feasible. Now,
	\begin{align}
		\fe(\tilde{\Ecal}) &= \fv(\VV) &&\text{(def. of $\fv$)}\\
		& \geq \alpha \cdot \OPTv && \text{(assumption on \textsf{ALG})}\\
		& = \alpha \cdot \OPTe. && \text{(Lemma~\ref{lem:opt_eq})}
    \end{align}
	\end{proof}
	
	\begin{lemma}
	\normalfont
	\textsc{\small Edge-Greedy} (Algorithm~\ref{alg:egreedy}) is an $\alpha_e(\kcomm, \kcomp)$-approximation algorithm for \ref{eq:codesign} under \TotalUni{}, where
	\begin{align}
			\alpha_e(\kcomm, \kcomp) \triangleq 1-\exp \big (-\min\,\{1,\kcomm/\kcomp\} \big ).
			\label{eq:egrd_apx1}
	\end{align}
	\label{lem:egrd_apx}
	\end{lemma}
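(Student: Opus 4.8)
The plan is to establish, in turn, the two things required of an $\alpha_e(\kcomm,\kcomp)$-approximation algorithm: first, that \textsc{\small Edge-Greedy} always returns a \ref{eq:codesign}-feasible pair under \TotalUni{}; and second, that the objective value of the pair it returns is at least $\alpha_e(\kcomm,\kcomp)\cdot\OPTe$. Essentially all the content sits in the second part, which reduces to the classical analysis of the greedy algorithm for normalized monotone submodular maximization applied to the first loop of \textsc{\small Edge-Greedy}; the remaining steps are bookkeeping.

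\emph{Feasibility.} I would trace the two sets through the algorithm. After the first greedy loop (lines~\ref{alg:egrd_1_start}--\ref{alg:egrd_1_end}) the edge set has at most $\min(\kcomm,\kcomp)$ elements, and $\Vgrd$ is taken to be the vertex cover that selects one endpoint per chosen edge, so $|\Vgrd|\le\min(\kcomm,\kcomp)\le\kcomm$. The local-optimization block (lines~\ref{alg:egrd_2_start}--\ref{alg:egrd_2_end}) is entered only when $\kcomp>\kcomm$, adds at most $\kcomp-\kcomm$ more edges, and draws them exclusively from $\edg(\Vgrd)$; so upon termination $|\Egrd|\le\kcomp$, $\Vgrd$ still covers $\Egrd$, and $|\Vgrd|$ is unchanged. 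Hence $(\Vgrd,\Egrd)$ meets the cardinality constraint on verified edges and the \TotalUni{} communication constraint, i.e., it is \ref{eq:codesign}-feasible.

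\emph{Approximation ratio.} Let $\Egrd^{(1)}$ denote the edge set produced by the first greedy loop and $t\triangleq\min(\kcomm,\kcomp)$ its number of iterations; $\Egrd^{(1)}$ is built by starting from $\varnothing$ and repeatedly appending the edge of $\Eall$ with largest marginal gain in $\fe$, subject to no constraint other than the iteration count. Let $\Ecal^\star$ be an optimal solution of \ref{eq:codesign}, so that $|\Ecal^\star|\le\kcomp$ and $\fe(\Ecal^\star)=\OPTe$. Writing $\varnothing=S_0\subseteq S_1\subseteq\cdots$ for the greedy chain and $\delta_i\triangleq\fe(\Ecal^\star)-\fe(S_i)$, submodularity together with the greedy choice at each step gives $\delta_i\le|\Ecal^\star|\cdot\big(\fe(S_{i+1})-\fe(S_i)\big)$, hence $\delta_{i+1}\le(1-1/|\Ecal^\star|)\,\delta_i$, and since $\fe$ is normalized, $\delta_t\le(1-1/|\Ecal^\star|)^t\fe(\Ecal^\star)$. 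Using $(1-1/|\Ecal^\star|)^t\le e^{-t/|\Ecal^\star|}\le e^{-t/\kcomp}$ (the last inequality because $|\Ecal^\star|\le\kcomp$) and $t=\min(\kcomm,\kcomp)$, I would obtain
\begin{align*}
\fe(\Egrd^{(1)})&\ge\big(1-e^{-t/\kcomp}\big)\fe(\Ecal^\star)\\
&=\big(1-\exp(-\min\{1,\kcomm/\kcomp\})\big)\OPTe=\alpha_e(\kcomm,\kcomp)\cdot\OPTe.
\end{align*}
Since the returned edge set contains $\Egrd^{(1)}$, monotonicity of $\fe$ then gives $\fe(\Egrd)\ge\fe(\Egrd^{(1)})\ge\alpha_e(\kcomm,\kcomp)\cdot\OPTe$, as claimed. (The degenerate case in which the first loop runs out of edges before $t$ steps is immediate: then $\Egrd^{(1)}=\Eall\supseteq\Ecal^\star$, so $\fe(\Egrd^{(1)})\ge\OPTe$.)

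I do not expect a genuine obstacle here. The one point that must be made carefully is \emph{why} the greedy estimate applies: the first loop performs \emph{unconstrained} edge selection --- the vertex-cover/communication feasibility is imposed only afterwards --- so $\Ecal^\star$, which is required only to have at most $\kcomp$ edges, is an admissible benchmark for it, and substituting $t=\min(\kcomm,\kcomp)$ into the exponent $t/\kcomp$ is exactly what yields $\min\{1,\kcomm/\kcomp\}$. Everything else (the feasibility accounting, and the observation that the local-optimization phase only adds edges and never violates feasibility) is routine.
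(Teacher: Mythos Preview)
Your proof is correct and follows essentially the same approach as the paper: establish feasibility by bounding the vertex cover produced in Phase~I and noting Phase~II only adds already-covered edges, then obtain the approximation ratio by applying the classical greedy bound for NMS maximization to the first $\min(\kcomm,\kcomp)$ greedy steps compared against a feasible optimum with at most $\kcomp$ edges. The only cosmetic difference is that the paper cites the greedy bound as a black box (via the relaxed problem with optimum $\text{OPT}_e\ge\OPTe$) whereas you expand the standard $\delta_i$-recursion inline and compare directly to $\OPTe$; the content is the same.
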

	\begin{proof}[Lemma~\ref{lem:egrd_apx}]
	
	We first show that Algorithm~\ref{alg:egreedy} returns a feasible solution to \ref{eq:codesign}. The algorithm proceeds in two phases. 
	In phase I (line~\ref{alg:egrd_1_start}-\ref{alg:egrd_1_end}),
	we greedily select $\min(\kcomm, \kcomp)$ edges. This clearly satisfies the computation budget $\kcomp$.
	Then, we find an arbitrary vertex cover of the selected edges (line~\ref{alg:egrd_1_end}),
	e.g., by selecting one of the two vertices for each edge. For the rest of the algorithm, we fix this vertex cover 
	as the subset of vertices we return. 
	Note that in the worst case, the selected edges will be disjoint.
	Consequently, the size of the vertex cover is at most $\min(\kcomm, \kcomp)$, satisfying the communication budget $\kcomm$. 
	Thus, by the end of phase I, the constructed solution is \ref{eq:codesign}-feasible. 
	
	Phase II (line~\ref{alg:egrd_2_start}-\ref{alg:egrd_2_end}) improves the current solution while ensuring feasibility. 
	Given the selected vertices, we identify the set of unselected edges that are ``communication-free'', 
	i.e., no extra communication cost will be incurred if we select any of these edges. 
	In other words, an edge is ``communication-free'' if it is already covered by the current vertex cover. 
	In Phase II, the algorithm simply adds ``communication-free'' edges greedily, until there is no computation budget left. The final solution
	thus remains \ref{eq:codesign}-feasible. 
	
	Next, we prove the performance guarantee presented in the lemma. To do so, we only need to look at 
	the solution after Phase I. 
	Let $\OPTe$ denote the optimal value of \ref{eq:codesign}.
	Consider the relaxed version of \ref{eq:codesign} where we remove the communication budget,
	\begin{equation}
		\underset{\Ecal \subseteq \Eall}{\text{maximize }} \fe(\Ecal) \text{ s.t. } |\Ecal| \leq \kcomp.
		\label{prob:ijrr_pe}
	\end{equation}
	Let OPT$_e$ denote the optimal value of the relaxed problem. Clearly, OPT$_e$ $\geq \OPTe$.
	Let $\Egrd$ be the set of selected edges after Phase I. By construction, $|\Egrd| \geq \min(\kcomm, \kcomp)$. 
	Thus,
	\begin{align}
		\fe(\Egrd) 
		& \geq \Big ( 1-\exp \big (- \min\{\kcomm, \kcomp\}/\kcomp \big ) \Big) \cdot \text{\text{OPT}$_e$} && \text{(\cite[Theorem~$1.5$]{krauseSurvey})} \\
		& = \alpha_e(\kcomm, \kcomp) \cdot  \text{\text{OPT}$_e$}  && \text{(def. of $\alpha_e$)}\\
		& \geq \alpha_e(\kcomm, \kcomp) \cdot \OPTe. && \text{(\text{OPT}$_e \geq \OPTe$)} 
	\end{align}
	Finally, note that because of Phase II, $\min(\kcomm, \kcomp)$ is only a lower bound on the number of edges selected by \textsc{\small E-Greedy}.
	\end{proof}

	\begin{lemma}
	\normalfont
	Let $\Delta$ be the maximum vertex degree in $\Gcal$. \textsc{\small Vertex-Greedy} (Algorithm~\ref{alg:vgreedy}) is an $\alpha_v(\kcomm, \kcomp, \Delta)$-approximation algorithm for \ref{eq:codesign} under \TotalUni{}, where
	\begin{align}
		\alpha_v(\kcomm, \kcomp, \Delta) & \triangleq 1-\exp \big (-\min\,\{1, \lfloor \kcomp / \Delta \rfloor/\kcomm \} \big).
		\label{eq:vgrd_apx1}
	\end{align}
	\label{lem:vgrd_apx}
	\end{lemma}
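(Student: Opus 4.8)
The plan is to recognize that \textsc{\small Vertex-Greedy} (Algorithm~\ref{alg:vgreedy}) is precisely the classical greedy algorithm applied to the set function $\fv : \VV \mapsto \fe(\edg(\VV))$ over the vertices $\Vall$, run until the first budget violation, and then to transfer the resulting guarantee back to \ref{eq:codesign} using the fact that, under \TotalUni{}, every \ref{eq:codesign}-feasible edge set is covered by at most $\kcomm$ vertices. Feasibility of the output is immediate: the test in line~\ref{alg:vgrd_break} ensures that upon termination $|\Vgrd| \leq \kcomm$ and $|\edg(\Vgrd)| \leq \kcomp$, and since $\Vgrd$ covers $\edg(\Vgrd)$ and meets \TotalUni{}, the returned pair $(\Vgrd,\edg(\Vgrd))$ is \ref{eq:codesign}-feasible.

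The first substantive step is a lower bound on the number of greedy rounds. Adding one vertex to $\Vgrd$ introduces at most $\Delta$ new incident edges, so after $j$ rounds $|\edg(\Vgrd)| \leq j\Delta$; hence neither the communication test ($|\Vgrd| \leq \kcomm$) nor the computation test ($|\edg(\Vgrd)| \leq j\Delta \leq \kcomp$) can trigger while $j \leq m \triangleq \min\{\kcomm, \lfloor \kcomp/\Delta \rfloor\}$. Therefore \textsc{\small Vertex-Greedy} selects at least $m$ vertices (assuming the generic case $|\Vall| \geq m$; otherwise all vertices are selected and $\edg(\Vgrd) = \Eall$ is trivially optimal), and its first $m$ picks coincide with those of unconstrained greedy on $\fv$.

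Next I would use that $\fv$ is normalized, monotone, and submodular. This follows from $\fe$ being NMS together with $\edg(\Acal \cup \Bcal) = \edg(\Acal) \cup \edg(\Bcal)$ and $\edg(\Acal \cap \Bcal) \subseteq \edg(\Acal) \cap \edg(\Bcal)$, in the same spirit as the proof of Theorem~\ref{th:fvNMS}. The standard greedy bound \cite[Theorem~$1.5$]{krauseSurvey} applied after $m$ rounds, combined with monotonicity of $\fv$ to extend from the first $m$ selected vertices to the final $\Vgrd$, gives
\begin{align}
\fe(\edg(\Vgrd)) = \fv(\Vgrd) \;\geq\; \Big(1 - \exp(-m/\kcomm)\Big)\, \max_{\VV \subseteq \Vall,\ |\VV| \leq \kcomm} \fv(\VV).
\end{align}
Finally, if $\Ecal^\star$ is optimal for \ref{eq:codesign} with a \TotalUni{}-feasible cover $\Vcal^\star$ (so $|\Vcal^\star| \leq \kcomm$ and $\Ecal^\star \subseteq \edg(\Vcal^\star)$), monotonicity of $\fe$ yields $\fv(\Vcal^\star) \geq \fe(\Ecal^\star) = \OPTe$; substituting and noting $m/\kcomm = \min\{1, \lfloor \kcomp/\Delta \rfloor/\kcomm\}$ gives $\fe(\edg(\Vgrd)) \geq \alpha_v(\kcomm,\kcomp,\Delta)\cdot \OPTe$, as claimed.

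The main obstacle is the iteration lower bound: it is the one place where $\Delta$ enters the guarantee, and it requires carefully checking that neither budget test fires before round $m$. Everything afterward is the textbook greedy analysis together with the cover-size reduction already used for \textsc{\small Edge-Greedy} in Lemma~\ref{lem:egrd_apx}; in particular, unlike \textsc{\small E-Greedy}, no Phase-II-style post-processing is needed because \textsc{\small Vertex-Greedy} never halts for a non-budget reason.
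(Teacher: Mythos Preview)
Your proposal is correct and follows essentially the same route as the paper: establish feasibility from the stopping test, lower-bound the number of greedy rounds by $m=\min\{\kcomm,\lfloor\kcomp/\Delta\rfloor\}$, apply the standard greedy bound to the NMS vertex function $\fv(\VV)=\fe(\edg(\VV))$, and then relate $\max_{|\VV|\leq\kcomm}\fv(\VV)$ to $\OPTe$ via a feasible cover of an optimal edge set. The only cosmetic difference is that the paper outsources the NMS property of $\fv$ and the cover-based relaxation step to \cite{tian18}, whereas you supply them directly via $\edg(\Acal\cup\Bcal)=\edg(\Acal)\cup\edg(\Bcal)$ and $\edg(\Acal\cap\Bcal)\subseteq\edg(\Acal)\cap\edg(\Bcal)$; the content is the same.
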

	\begin{proof}[Lemma~\ref{lem:vgrd_apx}]
	
	As shown in Algorithm~\ref{alg:vgreedy}, we terminate the greedy loop if the next selected vertex violates 
	either one of the budgets (line~\ref{alg:vgrd_break}). Thus, the returned solution is guaranteed to be \ref{eq:codesign}-feasible. 
	Note that the number of vertices we select is at least $\min(\kcomm, \lfloor \kcomp / \Delta \rfloor)$.
	This is because whenever we select less than this number of vertices, there is guaranteed to be enough computation and communication budgets to select the next vertex and include all its incident edges.
	
	Next, we prove the performance guarantee presented in the lemma. Let $\OPTe$ denote the optimal value of \ref{eq:codesign}.
	Consider the relaxed version of \ref{eq:codesign} under \TotalUni{} where we remove the computation budget,
	\begin{equation}
		\underset{\Ecal \subseteq \Eall}{\text{maximize }} \fe(\Ecal) \text{ s.t. } \exists \, \Vcal \in \Cover(\EE) \text{ s.t. } |\Vcal| \leq \kcomm.
		\label{prob:rss_pe}
	\end{equation}
	By \cite[Theorem~$2$]{tian18}, there is an approximation-preserving reduction from the above problem to the following problem,
	\begin{equation}
		\underset{\Vcal \subseteq \Vall}{\text{maximize }} h(\Vcal) \text{ s.t. } |\Vcal| \leq \kcomm.
		\label{prob:rss_pv}
	\end{equation}
	where $h: 2^{\Vall} \to \Rset_{\geq 0}: \Vcal \mapsto \fe(\edg(\Vcal))$.
	Let OPT$_v$ and OPT$^\prime_v$ denote the optimal values of 
	(\ref{prob:rss_pe}) and (\ref{prob:rss_pv}), respectively. By \cite[Lemma~$1.2$]{tian18}, OPT$_v$ $=$ OPT$^\prime_v$.
	Clearly, OPT$_v$ $\geq \OPTe$.
	Let $\Egrd, \Vgrd$ be the edges and vertices selected by \textsc{\small Vertex-Greedy}.
	By construction, $|\Vgrd| \geq \min(\kcomm, \lfloor \kcomp / \Delta \rfloor)$. 
	Thus,
	\begin{align}
		\fe(\Egrd) & = h(\Vgrd)  && \text{(def. of $h$)}\\
		& \geq \Big ( 1-\exp \big (- \frac{\min\{\kcomm, \lfloor \kcomp / \Delta \rfloor\}}{\kcomm} \big ) \Big) \cdot \text{\text{OPT}$^\prime_v$} && \text{(\cite[Theorem~$1.5$]{krauseSurvey})}\\
		& = \alpha_v(\kcomm, \kcomp, \Delta) \cdot \text{\text{OPT}$^\prime_v$} && \text{(def. of $\alpha_v$)} \\
		& = \alpha_v(\kcomm, \kcomp, \Delta) \cdot \text{\text{OPT}$_v$} && \text{(\cite[Lemma~$1.2$]{tian18})} \\
		& \geq \alpha_v(\kcomm, \kcomp, \Delta) \cdot \OPTe. && \text{(\text{OPT}$_v \geq \OPTe$)} 
	\end{align}
	As mentioned above, $\min(\kcomm, \lfloor \kcomp / \Delta \rfloor)$ is only a lower bound on the number of vertices selected by \textsc{\small V-Greedy}.
	\end{proof}
	
	\begin{proof}[Theorem~\ref{thm:submodular_apx}] By Lemma~\ref{lem:egrd_apx} and Lemma~\ref{lem:vgrd_apx},
	\begin{align}
		\alpha(\kcomm, \kcomp, \Delta) &= \max \, \{\alpha_e(\kcomm, \kcomp), \alpha_v(\kcomm, \kcomp, \Delta) \} \\
		& = 1 - \exp \big (-\min \big \{ 1, \max \, ( \kcomm / \kcomp, \lfloor \kcomp / \Delta \rfloor / \kcomm) \big \} \big) \\
		& = 1 - \exp \big (-\min\{1, \gamma\} \big ).
	\end{align}
	\end{proof}
	
	\section{Additional Results}
	\label{app:additional_results}
	
	\begin{figure}[H]
			\centering
			\hfill
			\begin{subfigure}[t]{0.30\textwidth}
				\centering
				\includegraphics[width=\textwidth]{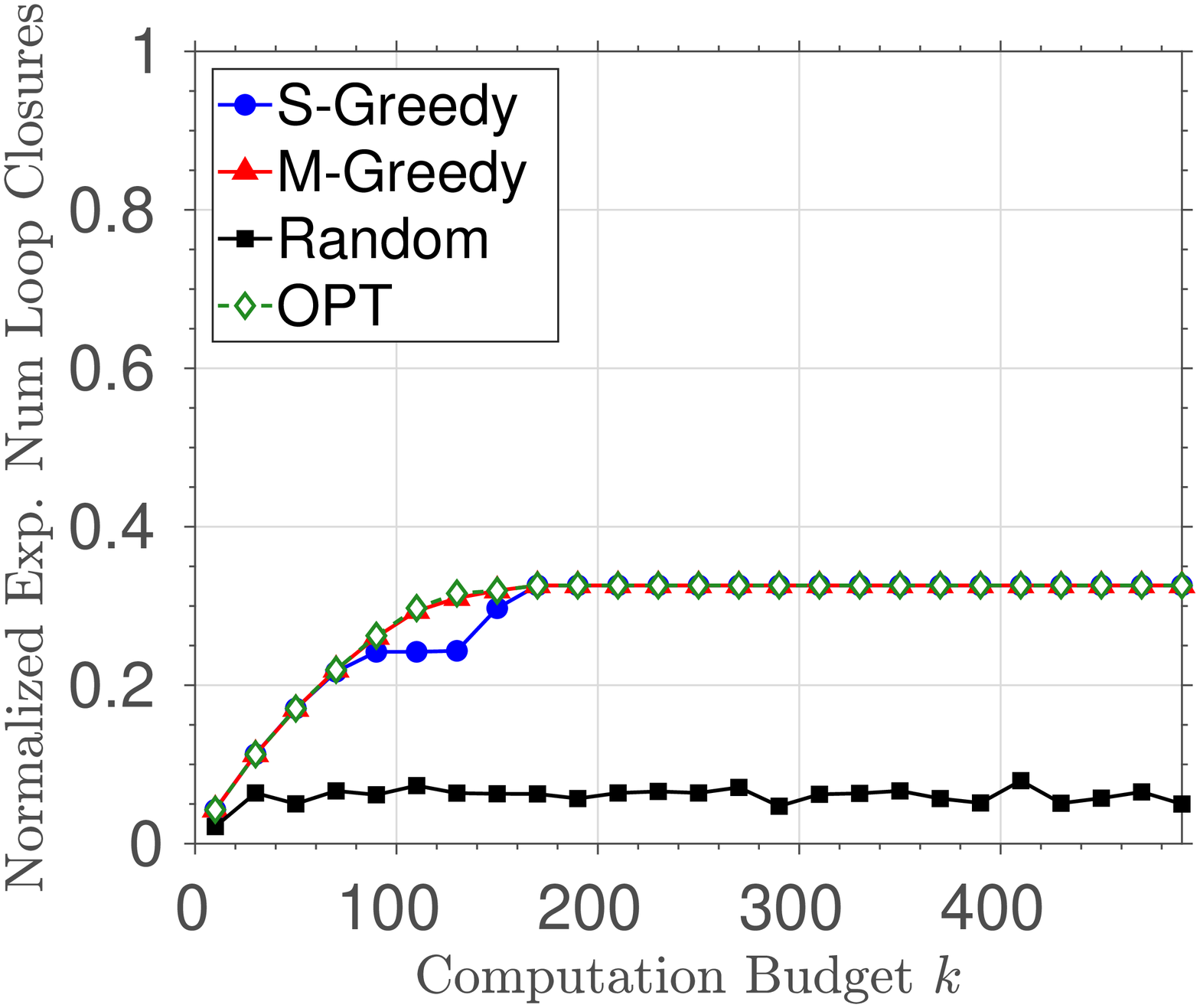}
				\caption{\small $b = 20$}
				\label{fig:kitti_maxprob_b_20}
			\end{subfigure}
			\hfill
			\begin{subfigure}[t]{0.30\textwidth}
				\centering
				\includegraphics[width=\textwidth]{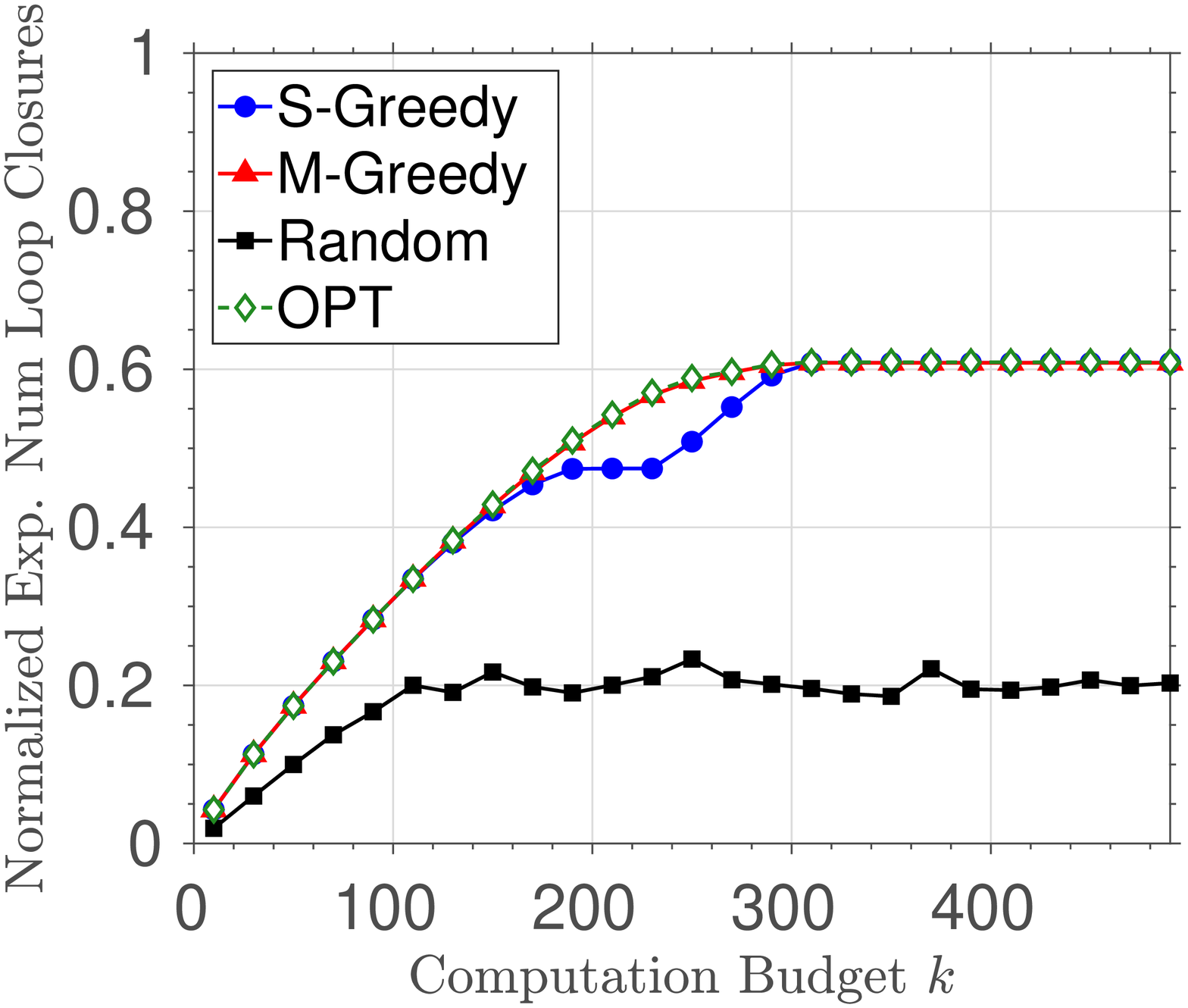}
				\caption{\small $b = 70$}
				\label{fig:kitti_maxprob_b_70}
			\end{subfigure}
			\hfill
			\begin{subfigure}[t]{0.30\textwidth}
				\centering
				\includegraphics[width=\textwidth]{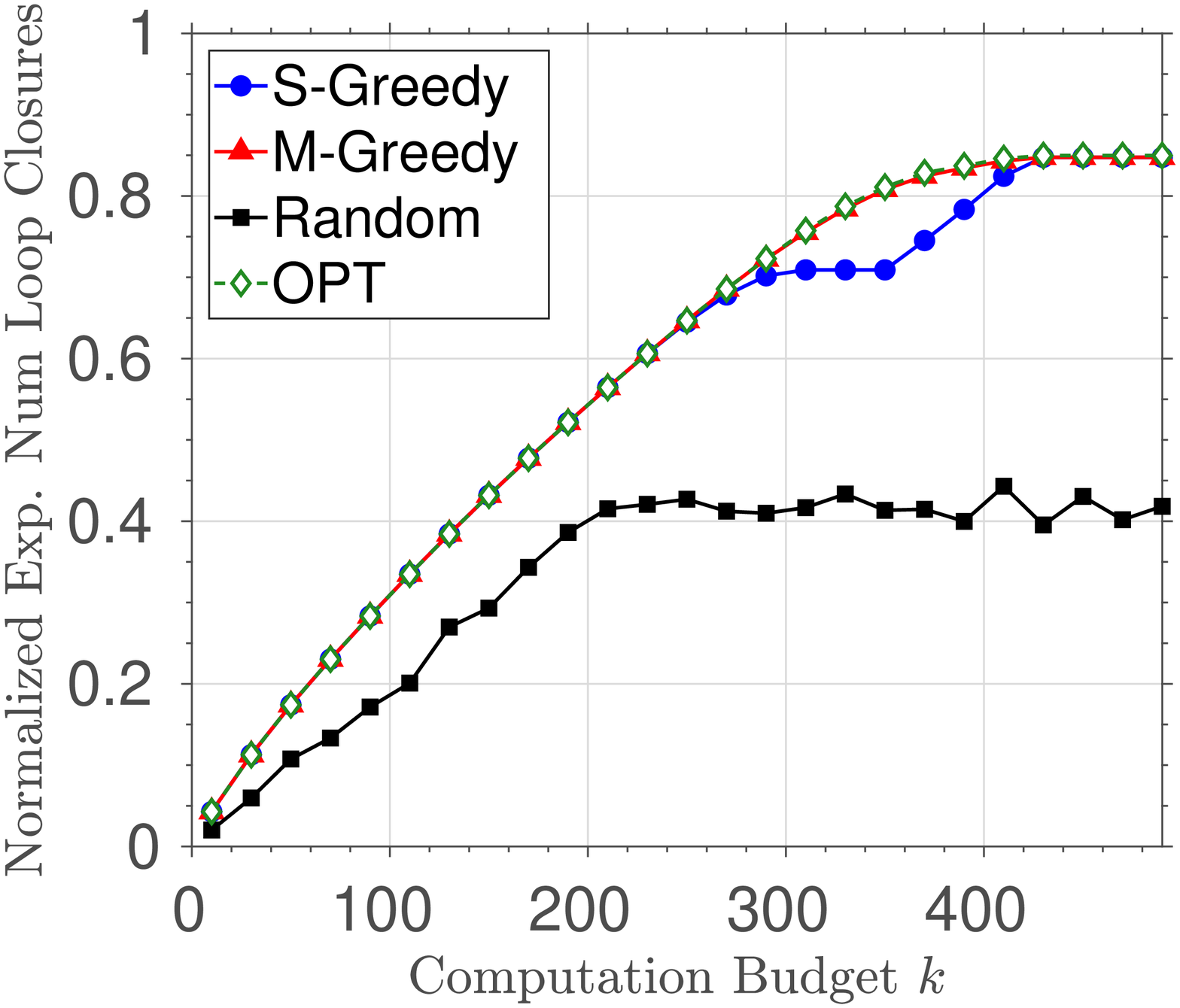}
				\caption{\small $b = 150$}
				\label{fig:kitti_maxprob_b_150}
			\end{subfigure}
			\hspace*{\fill}%
			\caption{\small Comparison of $\textsc{\small M-Greedy}$ and $\textsc{\small S-Greedy}$ when maximizing the expected number of true loop closures in KITTI~00.
			Each subplot shows one scenario with a fixed communication budget $\kcomm$ and varying computation budget $\kcomp$. For completeness we also include the random baseline and the optimal values (OPT) computed by solving the corresponding ILP.
			}
			\label{fig:kitti_maxprob_fix_b}
		\end{figure}
	
	\begin{figure}[H]
		\centering
		\hspace*{\fill}%
		\begin{subfigure}[t]{0.30\textwidth}
			\centering
			\includegraphics[width=\textwidth]{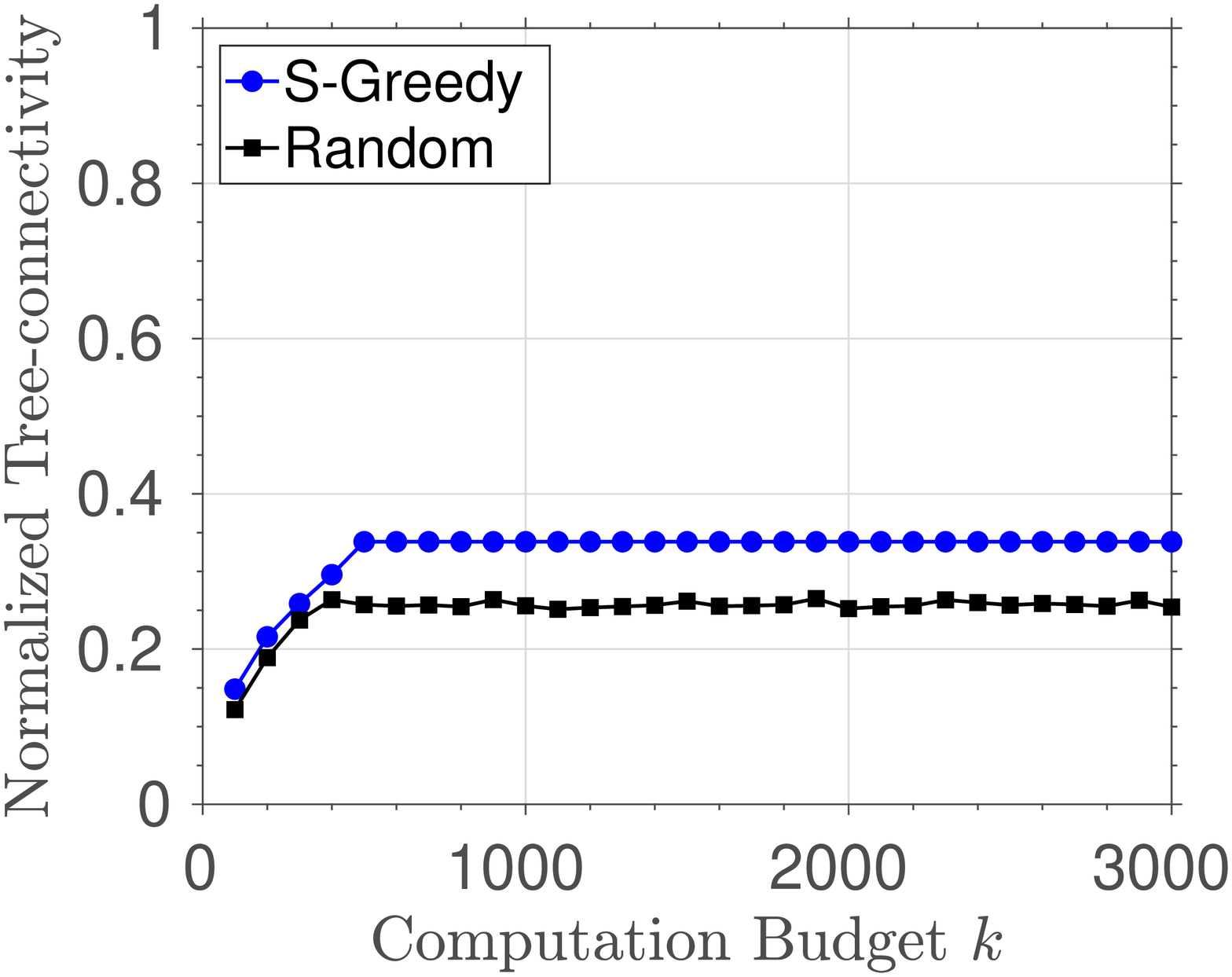}
			\caption{\small $b = 50$}
			\label{fig:atlas_wst_b_50}
		\end{subfigure}
		\hfill
		\begin{subfigure}[t]{0.30\textwidth}
			\centering
			\includegraphics[width=\textwidth]{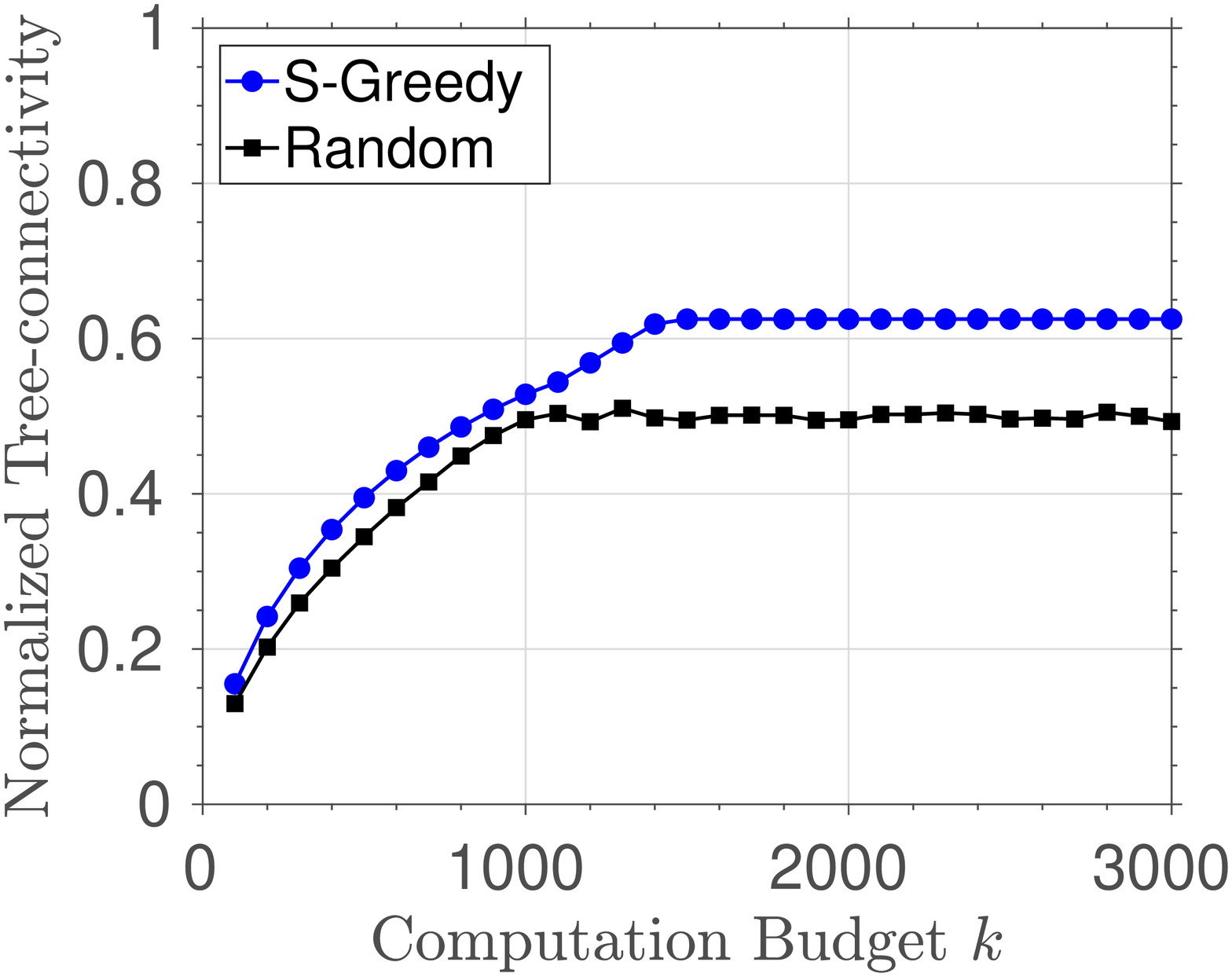}
			\caption{\small $b =150$}
			\label{fig:atlas_wst_b_150}
		\end{subfigure}
		\hfill
		\begin{subfigure}[t]{0.30\textwidth}
			\centering
			\includegraphics[width=\textwidth]{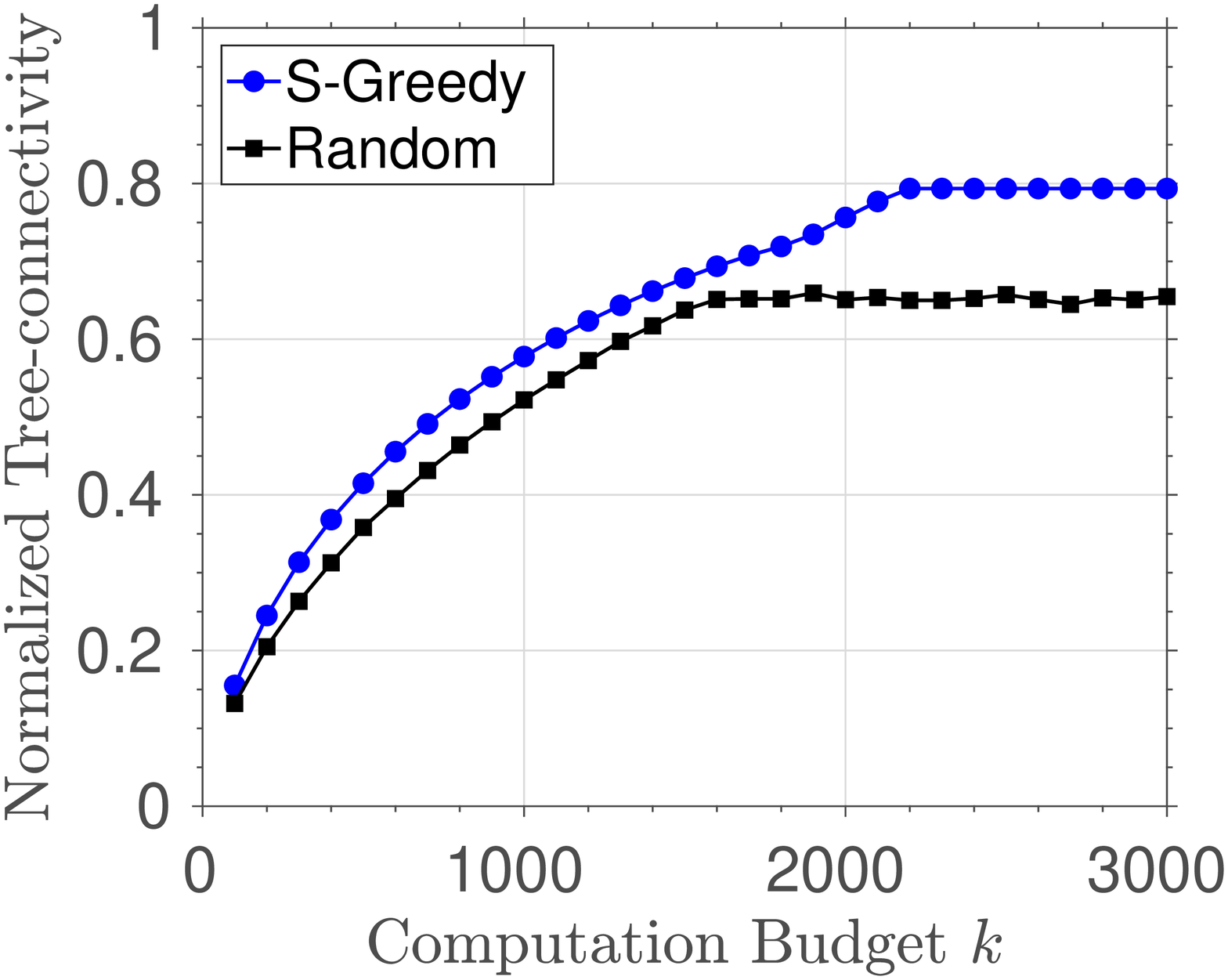}
			\caption{\small $b = 250$}
			\label{fig:atlas_wst_b_250}
		\end{subfigure}
		\hspace*{\fill}%
		\caption{\small Performance of \textsc{\small Submodular-Greedy} in simulation
			under the \TotalUni{} regime,
			with the tree-connectivity objective.
			Each subplot shows one scenario with a fixed communication budget $\kcomm$ and varying computation budget $\kcomp$. 
			The proposed algorithm is compared against a random baseline. In this case, we do not compute the upper bound (UPT) using convex relaxation
			as it is too time-consuming.
		}
		\label{fig:atlas_wst_fix_b}
	\end{figure}

\end{appendices}

\end{document}